\documentclass{article}



    \usepackage[final]{neurips_2025}



\usepackage[utf8]{inputenc} 
\usepackage[T1]{fontenc}    
\usepackage{hyperref}       
\usepackage{url}            
\usepackage{booktabs}       
\usepackage{amsfonts}       
\usepackage{nicefrac}       
\usepackage{microtype}      
\usepackage{xcolor}         
\usepackage{enumitem}
\usepackage{arydshln}
\usepackage{graphicx} 
\usepackage{float}
\usepackage{algorithm, algorithmic}
\usepackage{pifont}
\newcommand{\cmark}{\textcolor{green!60!black}{\ding{51}}} 
\newcommand{\xmark}{\textcolor{red!70!black}{\ding{55}}}   

\usepackage{amsthm}

\newtheorem{definition}{Definition}
\theoremstyle{remark}
\newtheorem{remark}{Remark}

\title{Causal LLM Routing: End-to-End Regret Minimization from Observational Data}

%

\author{%
  Asterios Tsiourvas\\
  MIT 
  \And
  Wei Sun\\
   IBM Research 
  \And
  Georgia Perakis \\
  MIT
}

\usepackage{amsmath} 
\usepackage{amsthm}
\usepackage{bbm}
\newtheorem{assumption}{Assumption}
\newtheorem{proposition}{Proposition}
\newtheorem{corollary}{Corollary}

\begin{document}

\maketitle

\begin{abstract}
LLM routing aims to select the most appropriate model for each query, balancing competing performance metrics such as accuracy and cost across a pool of language models. Prior approaches typically adopt a \emph{decoupled} strategy, where metrics are first predicted and the model is then selected based on these estimates. This setup is prone to compounding errors and often relies on \emph{full-feedback} data, where each query is evaluated by all candidate models, which is costly to obtain and maintain in practice. In contrast, we learn from \emph{observational data}, which records only the outcome of the model actually deployed. We propose a \emph{causal end-to-end} framework that learns routing policies by minimizing decision-making regret from observational data. To enable efficient optimization, we introduce two theoretically grounded surrogate objectives: a classification-based upper bound, and a softmax-weighted regret approximation shown to recover the optimal policy at convergence. We further extend our framework to handle heterogeneous cost preferences via an interval-conditioned architecture. Experiments on public benchmarks show that our method outperforms existing baselines, achieving state-of-the-art performance across different embedding models. 
\end{abstract}

\section{Introduction}


LLM routing is an emerging research area focused on optimizing model selection for each input query, balancing performance and cost across a pool of available LLMs. Because LLM performance varies significantly by task and input~\citep{hu2024routerbench}, as well as computational cost~\citep{ong2406routellm}, dynamic routing strategies have been proposed to select the most suitable model per query~\citep{shnitzer2023large}. This challenge becomes even more critical in agentic applications, where multiple LLM calls may be made within a single workflow, making efficient model selection essential for user experience and resource allocation. As LLM deployment scales, routing also contributes to environmental sustainability by reducing unnecessary computation~\citep{singh2025survey}.



Routing methods can be broadly classified based on whether they invoke one or multiple models per query. \emph{Multi-model} approaches include \emph{non-predictive routing}, which cascades models sequentially from light to heavy~\citep{wang2023fusing, chen2023frugalgpt}, and \emph{predictive ensembles}, which learn to combine outputs or scores from multiple LLMs~\citep{shekhar2024towards, huang2025thriftllm}. While these methods can improve accuracy and robustness, they incur significant computational cost and latency due to repeated inference.
By contrast, \emph{predictive routing} methods, including our approach, select a single LLM for each query by training a router that maps input queries to the most appropriate model~\citep{shnitzer2023large, ong2406routellm, somerstep2025carrot}. This framework offers a more scalable and cost-efficient solution, particularly in settings where minimizing latency and compute resources is critical.

A common formulation in predictive routing is to maximize a utility function of the form \( y_x(t) = a_x(t) - \lambda \cdot c_x(t) \), where \( a_x(t) \) and \( c_x(t) \) denote the quality (e.g., accuracy) and cost for model \( t \) on a given query $x$, and \( \lambda \geq 0 \) captures user cost sensitivity, or willingness-to-pay. Existing methods typically adopt a \emph{decoupled} approach: separate predictors are trained for each metric, and routing is performed by selecting the model with the highest estimated utility. However, decision quality is highly sensitive to these predictors, and errors can compound, especially when incorporating additional metrics (e.g., latency, faithfulness, alignment), increasing complexity and uncertainty.

Another fundamental limitation in the existing literature on predictive routing is its reliance on \emph{full-feedback} datasets. Prior work \citep{ong2406routellm} \citep{somerstep2025carrot} assumes access to data where each query has been evaluated by all available LLMs. This assumption is impractical: (i) the computational and monetary cost of exhaustively querying all models is prohibitive; and (ii) the rapid pace of LLM development makes it challenging to maintain comprehensive, up-to-date evaluation datasets. In contrast, \emph{observational data}, where each query is evaluated by only one model, is readily available from real-world LLM deployments, making it far more scalable than full feedback datasets. However, it introduces \emph{treatment bias} from historical routing policies, which can lead to suboptimal decisions if not properly addressed~\citep{swaminathan2015batch, kunzel2019metalearners}.


To the best of our knowledge, this is the first work to (i) learn LLM routing from {observational data}  and (ii) introduce an integrated learning framework for routing. 
Our main contributions are:

\begin{itemize}
\item We propose a \emph{causal end-to-end} framework that learns routing policies by directly minimizing decision-making regret using {observational data}. Unlike the predominant {decoupled} paradigm, where various performance metrics (e.g., accuracy, cost) are first predicted and then used to inform routing decisions, our method integrates prediction and prescription into a unified objective. By optimizing for regret directly, the framework is explicitly aligned with the final routing decision quality. Furthermore, it is designed to scale efficiently and leverage readily available observational data, while accounting for treatment bias without requiring costly full-feedback datasets.

\item As the original regret minimization objective is not directly differentiable, we derive two \emph{surrogate objectives} to enable end-to-end policy learning. The first is a {classification-based upper bound} that reframes regret minimization as a multiclass prediction problem under mild Lipschitz assumptions, allowing efficient training with standard methods. The second is a {softmax-weighted regret surrogate} that smoothly approximates regret using a softmax distribution and provably recovers optimal decisions at convergence.

\item We extend our framework to support \emph{heterogeneous cost preferences} by introducing a unified model that conditions on both the query and the user’s cost sensitivity. Leveraging the affine structure of the utility function, we design an efficient interpolation scheme using only two endpoint models per interval. We theoretically show that the optimal treatment is piecewise constant in the cost parameter and that our architecture can exactly represent the optimal policy, enabling flexible and scalable routing across diverse preferences.

\item We conduct \emph{comprehensive experiments} on two public benchmarks, demonstrating that our regret-minimizing and heterogeneous cost-aware approaches consistently outperform existing baselines. Our methods achieve state-of-the-art performance across both BERT-based and LLaMA-based embeddings, highlighting their robustness and practical effectiveness.

\end{itemize}


\section{Methodology}

\subsection{Problem Formulation}

We consider a dataset of $n$ observational samples, denoted by $ \mathcal{D} = \{(x_i, t_i, a_i, c_i)\}_{i=1}^n $, where each sample is independently drawn from the joint distribution $ p(x, t, a, c) $. Here, $ x_i \in \mathcal{X} \subset \mathbb{R}^d $ denotes a feature vector, typically an embedding, that characterizes the query; 
$ t_i \in [\mathcal{T}] := \{0, 1, \dots, T-1\} $ specifies the LLM assigned to the query; 
$a_i \in \mathbb{R}_{\geq 0}$ denotes a numeric quality score of the LLM's response, such as accuracy, or a preference rating; and $ c_i \in \mathbb{R}_{\geq 0} $ represents the cost incurred by model $t_i $ when processing query $x_i$.

Given a query $x \in \mathcal{X}$, the objective of an LLM router is to select a model $t \in \mathcal{T}$ that maximizes the cost-aware performance, or utility, defined as $y_x(t) := a_x(t) - \lambda \cdot c_x(t)$. Here, $\lambda \geq 0$ is a user-specified parameter, modeling the trade-off between accuracy and cost. Higher values of $y_x(t)$, corresponding to greater performance, are preferred.



\subsection{End-to-End Regret Minimization}\label{sec:smart}

Our goal is to route each prompt $x$ to the LLM $t$ such that the decision leads to the highest possible utility $y_x(t)$. To this end, we aim to learn an end-to-end policy $f: \mathcal{X} \to \mathcal{T}$ that minimizes the decision-making regret \citep{fernandez2022causal,pmlr-v162-zou22a}. More formally,
\begin{align*}
    f^* := \arg \min_f Regret(f)
\end{align*}
\noindent where regret is defined as
\begin{align}
    Regret(f) := \mathbb{E}_X[Y_X(t^*_X) - Y_X(f(X))] = \mathbb{E}_X[Y_X(t^*_X)] - \mathbb{E}_X[Y_X(f(X))],
\end{align}
\noindent with $t^*_X := \arg\max_{t\in\mathcal{T}} Y_X(t)$. 

We want to point out that unlike full-feedback datasets used in prior routing work, which record outcomes for all models $t \in \mathcal{T}$, observational datasets contain only \emph{partial feedback}, logging the outcome of a single model $t_i$ selected by historical policies. As a result, counterfactual outcomes for unobserved LLMs are missing, making it necessary to estimate them while correcting for treatment bias. We address this challenge of estimating $\hat{Y}_X(\cdot)$ using causal inference techniques, as detailed in Section~\ref{sect_DR}.

With an accurate approximation $\hat{Y}_X(\cdot)$,  the empirical regret can be approximated as
\begin{align} Regret(f) \approx \frac{1}{n} \sum_{i=1}^n \left( \hat{Y}_{x_i}(t^*_i) - \hat{Y}_{x_i}(f(x_i)) \right), \label{eq:approx} \end{align} 
\noindent where $t^*_i := \arg\max_{t \in \mathcal{T}} \hat{Y}_{x_i}(t)$ is the estimated optimal decision for query $x_i$.

A key challenge with the objective in Equation~\eqref{eq:approx} is its dependence on the discrete routing decision $f(x_i)$, which makes the regret non-differentiable. To address this, we introduce two surrogate loss functions that serve as differentiable approximations.

\paragraph{Surrogate Loss 1: Classification-Based Upper Bound}

Our first approach is to derive a tractable upper bound on the regret and directly minimize it. To do so, we define the following notion of Lipschitz continuity for utility functions over the probability simplex.

\begin{definition}\label{def:lip}
Let $\hat{Y}_x : \mathcal{T} \to \mathbb{R}$ be an estimated utility function assigning a scalar utility to each model $t \in \mathcal{T}$ for a given input $x$. We extend $\hat{Y}_x$ to the probability simplex $\Delta^{|\mathcal{T}|}$ by defining
\begin{align}
\hat{Y}_x(p) := \sum_{t \in \mathcal{T}} p(t) \hat{Y}_x(t),
\end{align}
for any $p \in \Delta^{|\mathcal{T}|}$. We say that $\hat{Y}_x$ is $L$-Lipschitz over the simplex with respect to the $\ell_1$ norm if for all $p, q \in \Delta^{|\mathcal{T}|}$,
\begin{align}
|\hat{Y}_x(p) - \hat{Y}_x(q)| \leq L \cdot \|p - q\|_1.
\end{align}
Here, the constant $L$ can be taken as \( L := \max_{t \in \mathcal{T}} |\hat{Y}_x(t)| \). 
\end{definition}

This condition ensures that small changes in the model distribution lead to bounded changes in expected utility. Since $\mathcal{T}$ is a finite set and $\hat{Y}_x(\cdot)$ is typically learned via bounded smooth function approximators (e.g., neural networks), it is natural to expect bounded variation in utility values across nearby treatments. Note that when $p$ is a one-hot vector $e_{t^*}$ and $q = f(x)$ is a probabilistic policy, this setting corresponds to our model selection problem, where we seek to minimize the regret between the optimal choice and a stochastic routing decision.

\begin{proposition}
\label{prop:regret_ce_bound}
Suppose the estimated utility function $\hat{Y}_x : \mathcal{T} \to \mathbb{R}$ is $L$-Lipschitz continuous over the probability simplex with respect to the $\ell_1$ norm, as in Definition~\ref{def:lip}. Then, for a policy $f : \mathcal{X} \to \Delta^{|\mathcal{T}|}$ that outputs a distribution $f(x)$ over $\mathcal{T}$, the regret can be upper bounded by:
\begin{align}
    \text{Regret}(f) \leq L \cdot \frac{1}{n} \sum_{i=1}^n \sqrt{2 \cdot \text{CE}(t_i^*, f(x_i))},
\end{align}
where $t_i^* := \arg\max_{t \in \mathcal{T}} \hat{Y}_{x_i}(t)$ is the optimal treatment for input $x_i$, and $\text{CE}(t_i^*, f(x_i)) := -\log f(x_i)_{t_i^*}$ denotes the cross-entropy loss.
\end{proposition}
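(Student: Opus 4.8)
The plan is to bound the empirical regret of Equation~\eqref{eq:approx} term by term, and for each query $x_i$ convert the $\ell_1$-based Lipschitz bound of Definition~\ref{def:lip} into a cross-entropy bound via Pinsker's inequality. Since the statement concerns the regret built from the estimates $\hat{Y}_{x_i}(\cdot)$ (equivalently the true regret when $\hat{Y} = Y$), the whole argument is a deterministic, sample-wise manipulation — no probabilistic concentration is needed.

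First I would observe that each per-query term $\hat{Y}_{x_i}(t_i^*) - \hat{Y}_{x_i}(f(x_i))$ is nonnegative: because $t_i^*$ maximizes $\hat{Y}_{x_i}(\cdot)$ and $\hat{Y}_{x_i}(f(x_i)) = \sum_{t} f(x_i)_t\, \hat{Y}_{x_i}(t)$ is a convex combination of the values $\{\hat{Y}_{x_i}(t)\}_t$, it is at most $\hat{Y}_{x_i}(t_i^*)$. Writing $\hat{Y}_{x_i}(t_i^*) = \hat{Y}_{x_i}(e_{t_i^*})$ with $e_{t_i^*} \in \Delta^{|\mathcal{T}|}$ the one-hot vector at $t_i^*$, the per-query term equals $|\hat{Y}_{x_i}(e_{t_i^*}) - \hat{Y}_{x_i}(f(x_i))|$, so Definition~\ref{def:lip} yields the bound $L \cdot \| e_{t_i^*} - f(x_i) \|_1$.

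Next I would evaluate $\| e_{t_i^*} - f(x_i) \|_1$ explicitly. With $p := f(x_i)$ and $j := t_i^*$, one has $\| e_j - p \|_1 = (1 - p_j) + \sum_{k \neq j} p_k = 2(1 - p_j)$, i.e.\ twice the total variation distance between $e_j$ and $p$. The key step is then to relate $1 - p_j$ to $\text{CE}(t_i^*, f(x_i)) = -\log p_j$. Noting that $-\log p_j$ is exactly the KL divergence $\mathrm{KL}(e_j \,\|\, p)$, Pinsker's inequality gives $1 - p_j = \mathrm{TV}(e_j, p) \le \sqrt{\tfrac{1}{2}\,\mathrm{KL}(e_j \,\|\, p)} = \sqrt{\tfrac{1}{2}\,\text{CE}(t_i^*, f(x_i))}$. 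Combining, $\| e_{t_i^*} - f(x_i) \|_1 \le \sqrt{2\,\text{CE}(t_i^*, f(x_i))}$; substituting into the per-query bound and averaging over $i = 1, \dots, n$ produces the claimed inequality.

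The main obstacle is this last passage from the $\ell_1$ distance to the cross-entropy. The cleanest route is Pinsker applied to the degenerate pair $(e_j, f(x_i))$; a self-contained alternative is to verify the scalar inequality $(1-u)^2 \le -2\log u$ for $u \in (0,1]$ directly, by checking that both sides vanish at $u=1$ and that $g(u) := -2\log u - (1-u)^2$ has $g'(u) = -2/u + 2(1-u) \le 0$ on $(0,1]$. Beyond that, the only care needed is to keep the Lipschitz constant consistent (using $L := \max_{t \in \mathcal{T}} |\hat{Y}_{x_i}(t)|$ as in Definition~\ref{def:lip}) and to handle the degenerate case $p_j = 0$, where the cross-entropy is $+\infty$ and the bound holds trivially.
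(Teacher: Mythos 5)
Your argument matches the paper's proof essentially step for step: write the per-query regret as $\hat{Y}_{x_i}(e_{t_i^*}) - \hat{Y}_{x_i}(f(x_i))$, apply the $\ell_1$-Lipschitz bound from Definition~\ref{def:lip}, and convert $\|e_{t_i^*} - f(x_i)\|_1$ to $\sqrt{2\cdot\text{CE}}$ via Pinsker's inequality applied to the degenerate pair, using $\mathrm{KL}(e_{t_i^*}\,\|\,f(x_i)) = -\log f(x_i)_{t_i^*}$; your added remarks on nonnegativity of each term and on the $p_j=0$ case are correct refinements. One small caveat on your ``self-contained alternative'': since $\|e_j - p\|_1 = 2(1-p_j)$, the scalar inequality you actually need is $2(1-u)^2 \le -\log u$ (equivalently $4(1-u)^2 \le -2\log u$), not the weaker $(1-u)^2 \le -2\log u$ you wrote, which would only yield $\|e_j - p\|_1 \le 2\sqrt{2\cdot\text{CE}}$; the stronger inequality does hold by the same monotonicity check, because $h(u) := -\log u - 2(1-u)^2$ satisfies $h(1)=0$ and $h'(u) = -1/u + 4(1-u) \le 0$ on $(0,1]$ since $4u(1-u)\le 1$.
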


This motivates a classification-based surrogate objective: rather than modeling the full utility surface, we directly learn a policy $f : \mathcal{X} \to \mathcal{T}$ by solving a supervised learning problem, where the target label for each input $x_i$ is the estimated optimal decision $t_i^*$. Optimizing a classification loss $d(t_i^*, f(x_i))$, serves as a tractable surrogate for minimizing the regret in Equation~\eqref{eq:approx}, as it upper bounds the regret under mild assumptions. This formulation reduces policy learning to a multiclass classification task, enabling efficient training using standard techniques.

\subsubsection{Surrogate Loss 2: Softmax-Weighted Regret}

The second proxy directly minimizes the regret using a differentiable softmax approximation. Specifically, we model the policy function \( f \) as a neural network with \( |\mathcal{T}| \) outputs passed through a softmax layer with temperature parameter \( \tau > 0 \), which makes the regret surrogate in Equation~\eqref{eq:approx} differentiable. The first term of the regret, \( \mathbb{E}_X[Y_X(t^*)] \), is approximated as:
\begin{equation}
    \mathbb{E}_X[Y_X(t^*)] \approx \frac{1}{n} \sum_{i=1}^n \hat{Y}_{x_i}(t_i^*),
    \quad \text{where } t_i^* := \arg\max_{t \in \mathcal{T}} \hat{Y}_{x_i}(t).
\end{equation}
The second term, \( \mathbb{E}_X[Y_X(f(X))] \), is estimated by treating the softmax output as a distribution over treatments:
\begin{equation}
    \mathbb{E}_X[Y_X(f(X))] \approx \frac{1}{n} \sum_{i=1}^n \sum_{t=1}^{|\mathcal{T}|} 
    \hat{Y}_{x_i}(t) \cdot \mathrm{softmax}(f(x_i))_t.
\end{equation}


Combining the two, we minimize the following differentiable surrogate objective:
\begin{align}
    \min_f \ \frac{1}{n} \sum_{i=1}^n \Bigg( \hat{Y}_{x_i}(t_i^*) - \sum_{t=1}^{|\mathcal{T}|} \hat{Y}_{x_i}(t) \cdot \mathrm{softmax}(f(x_i))_t \Bigg).
\end{align}

After training, the learned policy prescribes for each $x \in \mathcal{X}$ the treatment $ \hat{t} = \arg\max_{t \in \mathcal{T}} f(x)_t.$ 
We now show that this objective recovers pointwise optimal treatment assignment, thus providing a consistent and differentiable approximation to the original regret minimization objective. 

\begin{proposition}
Let $f: \mathcal{X} \to \mathbb{R}^{|\mathcal{T}|}$ be a neural network whose output is passed through a softmax layer with fixed temperature $\tau > 0$, and define $t^*_i := \arg\max_{t \in \mathcal{T}} \hat{Y}_{x_i}(t)$. Then, optimizing the softmax-weighted surrogate regret objective via gradient descent 
\begin{align}
    \min_f \frac{1}{n} \sum_{i=1}^n \bigg( \hat{Y}_{x_i}(t^*_i) - \sum_{t=1}^{|\mathcal{T}|} \hat{Y}_{x_i}(t) \cdot \mathrm{softmax}(f(x_i))_t \bigg)
\end{align}
leads the model $f$ to concentrate all probability mass on the optimal treatment $t^*_i$. That is, at convergence,\begin{equation}
    \mathrm{softmax}(f(x_i))_t \to 
    \begin{cases}
        1 & \text{if } t = t^*_i, \\
        0 & \text{otherwise}.
    \end{cases}
\end{equation}
\end{proposition}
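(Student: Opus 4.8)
The plan is to analyze the surrogate objective pointwise in $x_i$, since the objective decouples across training samples. Fix a single sample $x_i$ and write $u_t := \hat{Y}_{x_i}(t)$ for the estimated utilities and $p_t := \mathrm{softmax}(f(x_i))_t$ for the induced probabilities. The per-sample contribution to the objective is $g_i(f) := u_{t^*_i} - \sum_t u_t p_t$, where $u_{t^*_i} = \max_t u_t$ is a constant not depending on $f$. Thus minimizing $g_i$ is equivalent to maximizing the linear functional $\sum_t u_t p_t = \langle u, p\rangle$ over the probabilities $p$ in the relative interior of the simplex $\Delta^{|\mathcal{T}|}$ reachable by the softmax map.

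The key structural observation is that $\langle u, p \rangle \le \max_t u_t = u_{t^*_i}$ for every $p \in \Delta^{|\mathcal{T}|}$, with equality if and only if $p$ is supported on $\arg\max_t u_t$; hence $g_i \ge 0$ with infimum $0$. I would then argue that this infimum is \emph{not attained} on the open simplex (the softmax output always has strictly positive entries for finite logits), but is approached as the logit gap between $t^*_i$ and the other coordinates diverges: if $f(x_i)_{t^*_i} - f(x_i)_t \to +\infty$ for all $t \ne t^*_i$, then $p_{t^*_i} \to 1$ and $p_t \to 0$, so $g_i \to 0$. The next step is the gradient-descent dynamics argument: compute $\partial g_i / \partial f(x_i)_k$. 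Using the standard softmax Jacobian $\partial p_t / \partial z_k = \tfrac1\tau p_t(\mathbf{1}[t=k] - p_k)$, one gets $\partial g_i/\partial z_k = -\tfrac1\tau\, p_k\,(u_k - \bar u)$ where $\bar u := \sum_t u_t p_t$ is the current expected utility. So the negative gradient pushes logit $z_k$ up when $u_k > \bar u$ and down when $u_k < \bar u$; since $u_{t^*_i} = \max_t u_t \ge \bar u$ always (strictly, as long as mass remains off the argmax set), the logit $z_{t^*_i}$ is monotonically increased relative to the others. I would then invoke that $g_i$ is bounded below by $0$ and decreasing along the gradient flow, conclude the objective converges, and identify that the only way the gradient vanishes (all $p_k(u_k - \bar u) = 0$) in the limit is $\bar u = u_{t^*_i}$, i.e. all mass on the optimal treatment(s). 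Summing over $i$ gives the claim for the full empirical objective, since the samples' contributions are independent in the policy outputs at each $x_i$ (assuming $f$ is expressive enough to set the logits at each $x_i$ independently, or arguing in the overparametrized/pointwise regime).

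The main obstacle — and the place where the statement is really only true in an idealized sense — is making the gradient-descent convergence rigorous: the infimum $g_i = 0$ is attained only at infinity in logit space, so strictly speaking gradient descent with a fixed step size does not converge to a minimizer but rather drives $\|f(x_i)\| \to \infty$ along a particular direction. The honest argument is the one for logistic-type losses: show the objective is smooth, bounded below, and has no finite stationary point with $\bar u < u_{t^*_i}$, so any convergent-in-value gradient method must have $\bar u \to u_{t^*_i}$, which forces $p_{t^*_i} \to 1$; one can also note the separable structure and cite the implicit-bias/margin-maximization behavior of gradient descent on such monotone losses. A secondary subtlety is the tie case $|\arg\max_t u_t| > 1$, where mass concentrates on the argmax set but not necessarily a single coordinate; I would handle this by either assuming a unique maximizer (generic for continuous utility estimates) or restating the conclusion as concentration on $\arg\max_t \hat Y_{x_i}(t)$. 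I would present the clean pointwise argument (linear objective over the simplex, maximized at the vertex $e_{t^*_i}$, gradient points toward it) as the core of the proof and flag the at-infinity caveat explicitly rather than pretending gradient descent reaches an exact optimum.
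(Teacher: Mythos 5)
Your proposal follows essentially the same route as the paper's proof: reduce to maximizing the linear functional $\langle \hat{Y}_{x_i}, \mathrm{softmax}(f(x_i))\rangle$ pointwise, differentiate through the softmax Jacobian, and argue that gradient descent drives the expected utility monotonically up to its maximum $\hat{Y}_{x_i}(t^*_i)$, which forces the probability mass onto $t^*_i$. Your version is in fact more careful than the paper's: the closed-form gradient $-\tfrac{1}{\tau}\,p_k(u_k-\bar u)$, the observation that the infimum is attained only as the logit gap diverges (so convergence is in value, not to a finite minimizer), and the explicit handling of ties and of the per-sample expressivity assumption are all points the paper's proof glosses over.
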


\subsection{Estimating Counterfactual Utility via Causal Inference}\label{sect_DR}

In the previous sections, we assumed that we have access to $\hat{Y}_X(\cdot)$. Given the observational nature of the data, the potential utility function $Y_x(\cdot)$ is not directly observable. We follow the potential outcomes framework \cite{rosenbaum1983central, rubin1984bayesianly} and assume the existence of a potential utility function $Y_x(t)$. 
We adopt the following assumptions, which are standard in the causal inference literature.

\begin{assumption}[Stable Unit Treatment Value]
The potential outcome of one sample is independent of the treatment assignments on the other samples.
\end{assumption}

\begin{assumption}[Ignorability]
The assigned treatments and potential outcomes are independent conditional on observed covariates, i.e. $t\perp \{Y_x(t')|t'\in \mathcal{T}\}|x$ \text{\citep{hirano2004propensity}}.
\end{assumption}

\begin{assumption}[Support]
For $x\in \mathcal{X}$ such that $p(x)>0$, we have that $p(t|x)>0$ for each $t\in\mathcal{T}$.
\end{assumption}

In the causal inference literature, counterfactual outcomes can be estimated using various methods. Such examples include the  ``meta-learner''~\citep{kunzel2019metalearners}, or the Inverse Propensity Weighting (IPW) estimator~\citep{horvitz1952generalization}. In this work, we utilize the doubly robust estimator introduced by \cite{dudik2011doubly}, defined as: 
\begin{align} \hat{Y}_x(t) := \frac{(y - \hat{r}_{t}(x)) \mathbbm{1}[\pi(x) = t]}{\hat{p}(t|x)} + \hat{r}_{t}(x), \quad \forall t \in \mathcal{T}, \end{align} 
\noindent where $\hat{r}_{t} : \mathcal{X} \to \mathcal{Y}$ denotes the direct outcome regression model for treatment $t$, $\hat{p}(t|x)$ is the estimated propensity score, and $\pi : \mathcal{X} \to \mathcal{T}$ is the logging policy observed in the dataset $\mathcal{D}$.


The doubly robust estimator combines an outcome model $\hat{r}_t(x)$ and a propensity model $\hat{p}(t|x)$, yielding consistent estimates if either is correctly specified~\citep{dudik2011doubly}. It offers a favorable bias-variance trade-off: the propensity model corrects for selection bias, while the outcome model reduces variance by leveraging structure in the data.\\

\begin{remark}
     While we use the doubly robust (DR) estimator in our experiments due to its favorable bias–variance tradeoff and strong practical performance, our framework is estimator-agnostic: any valid counterfactual estimator can be used to compute utility estimates. This includes more advanced approaches that relax or mitigate these assumptions
\end{remark}

In the experimental section, we show that 
ignoring the treatment bias leads to inaccurate counterfactual estimates and causes substantial degradation in routing quality, highlighting the limitations of standard supervised learning approaches that assume full feedback.

\section{Routing under Heterogeneous Cost Preferences}\label{sec:multi_task}

In the previous section, we introduced a causal end-to-end framework for learning optimal routing policies from observational data, where the objective is to maximize a utility function of the form $y = a - \lambda c$, with a \emph{fixed} \( \lambda \geq 0 \) representing the trade-off between accuracy and cost.

In practice, however, user preferences vary, i.e., different queries may be associated with different values of $\lambda$. From a system design perspective, training and maintaining a separate router for each possible $\lambda$ is impractical. In this section, we propose a unified approach that supports routing under heterogeneous cost sensitivities. We first present a joint model architecture that conditions on both the query and the cost parameter, and then provide a theoretical analysis to justify the proposed design.

\subsection{Interval-Conditioned Joint Router}\label{sec:model_Setup}

We design a neural network $f : \mathcal{X} \times \mathbb{R}_{\geq 0} \to \mathbb{R}^{|\mathcal{T}|}$ that jointly takes a query $x \in \mathcal{X}$ and a cost sensitivity parameter $\lambda \in \mathbb{R}_{\geq 0}$ as input, outputs a score vector over available LLMs and the routing decision is then made via:\begin{align}
    \hat{t} = \arg\max_{t \in \mathcal{T}} f(x, \lambda)_t.
\end{align}
We assume access to a finite, representative set of cost preferences $\Lambda := \{\lambda_1, \dots, \lambda_m\} \subset \mathbb{R}_{\geq 0}$. For ease of notation we assume that $\lambda_1 < \lambda_2 < \dots < \lambda_m$. In practice, these values may correspond to discrete service quality tiers (e.g., basic, standard, premium) that reflect users’ varying willingness to trade off cost for performance.  
For each $\lambda \in \Lambda$, we partition the training data by cost preference and estimate $\lambda$-specific utility $\hat{Y}_{x_i}^{\lambda}(t)$, which forms the basis of our joint interval-conditioned architecture.

\paragraph{Training Procedure.} 
\begin{enumerate}[leftmargin=1.5em]
    \item 
    For each $\lambda \in \Lambda$, we first train an individual router $f_\lambda : \mathcal{X} \to \mathbb{R}^{|\mathcal{T}|}$ using the methods introduced in Section~\ref{sec:smart}.
    \item 
    For each interval $[\lambda_j, \lambda_{j+1}]\, j \in [1,\dots,m-1]$,  we initialize a joint network $f(x, \lambda): \mathcal{X} \times \mathbb{R}_{\geq0} \to \mathbb{R}^{|\mathcal{T}|}$ that uses as input both $x$, $\lambda \in (\lambda_j, \lambda_{j+1})$.
    \item The shared model is fine-tuned to minimize regret over the interval:
    \begin{align}
        \min_f \ \frac{1}{2n} \sum_{\lambda \in \{\lambda_j, \lambda_{j+1}\}} \sum_{i=1}^n \text{Regret}\left(f(x_i, \lambda)\right),
    \end{align}
    where regret is computed using the doubly robust estimator as described earlier, under the corresponding $\lambda$-specific utility $\hat{Y}_{x_i}^{\lambda}(t)$.
\end{enumerate}

\paragraph{Deployment Strategy.} At inference time, given a user-specified cost sensitivity parameter $\lambda \in \mathbb{R}_{\geq 0}$:
\begin{itemize}[leftmargin=1.5em]
    \item If $\lambda \in \Lambda$, we use the individual model $f_{\lambda}(x)$.
    \item If $\lambda \notin \Lambda$, we identify the closest neighbors $\underline{\lambda}, \overline{\lambda} \in \Lambda$ such that $\underline{\lambda} < \lambda < \overline{\lambda}$, and we use the corresponding joint network $f(x, \lambda)$ trained to generalize across the preference in $(\underline{\lambda}, \overline{\lambda} )$.
\end{itemize}


\subsection{Model Architecture}

A key component of the heterogeneous cost preference routing setup introduced in Section~\ref{sec:model_Setup} is the interval-conditioned joint model $f(x, \lambda)$, that for each interval $[\lambda_j, \lambda_{j+1}]$, interpolates between the two corresponding individual models $f_{\lambda_j}$ and $f_{\lambda_{j+1}}$.
Specifically, the architecture is designed to exploit the \emph{affine structure} of the utility function with respect to $\lambda$, namely $y = a - \lambda c$. This motivates a lightweight parameterization that uses only the two endpoints of the interval $[\lambda_j, \lambda_{j+1}]$ rather than all $m$ pre-trained models. Concretely, the joint model that we propose is defined as:
\begin{align}
    f(x, \lambda) = \texttt{Linear}\left( [f_{\lambda_j}(x),\ f_{\lambda_{j+1}}(x)]+\ g(\lambda) \right),
\end{align}
where $[\cdot,\cdot]$ denotes concatenation, and $g(\lambda) := \texttt{Activation}(\texttt{Linear}(\lambda))$ is a learnable representation of the cost sensitivity parameter.

This architecture enables smooth interpolation between $f_{\lambda_j}$ and $f_{\lambda_{j+1}}$ within $[\lambda_j, \lambda_{j+1}]$, allowing the router to adapt to intermediate values of $\lambda$ without requiring an individual model for each one. By conditioning only on the two bounding models, this design achieves computational efficiency and strong generalization across cost preferences. The proposed architecture is illustrated in Figure~\ref{fig:budget-aware-architecture}.

\begin{figure}[h]
\centering
\includegraphics[width=0.82\linewidth]{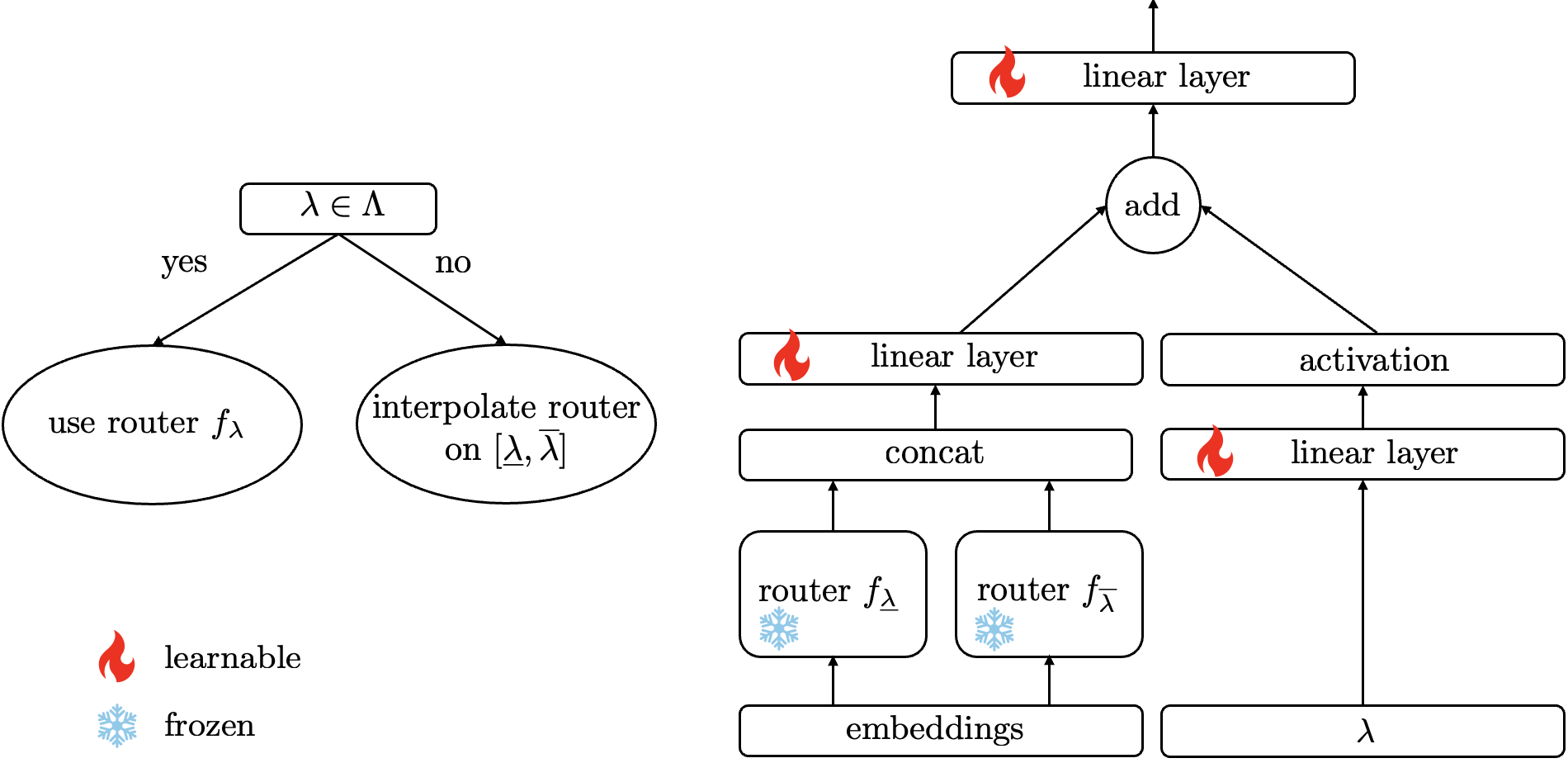}
\caption{Overview of the proposed interval-conditioned joint router framework. \textbf{Left:} Decision logic for handling a given cost sensitivity parameter $\lambda$. \textbf{Right:} Joint router architecture.}
\label{fig:budget-aware-architecture}
\end{figure}

\subsection{Theoretical Guarantees}

We now present theoretical guarantees that justify the structure and training strategy of joint cost-preference router. These guarantees leverage the affine nature of the utility function, 
which is linear in the cost parameter $\lambda$ for fixed accuracy $a$ and cost $c$, enabling exact interpolation for specific cost sensitivity across fixed  intervals.

\begin{proposition}[Piecewise Constant Optimal Policy]
\label{prop:piecewise-policy}
Fix a query $x \in \mathcal{X}$ and assume the estimated utility is affine in $\lambda$, i.e., $\hat{Y}_x^\lambda(t) = a_x(t) - \lambda \cdot c_x(t)$ for all $t \in \mathcal{T}$. Then the optimal treatment
\[
t^*(\lambda) := \arg\max_{t \in \mathcal{T}} \hat{Y}_x^\lambda(t)
\]
is piecewise constant in $\lambda$. That is, the cost parameter $\mathbb{R}_{\geq 0}$ can be partitioned into intervals over which the optimal treatment remains fixed.
\end{proposition}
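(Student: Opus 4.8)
The plan is to exploit the fact that each $\hat{Y}_x^\lambda(t) = a_x(t) - \lambda\, c_x(t)$ is, for fixed $x$ and $t$, an affine (hence linear in slope) function of $\lambda$, so the optimal treatment $t^*(\lambda)$ is the upper envelope of a finite family of lines. First I would fix the query $x$ and, to simplify notation, write $\alpha_t := a_x(t)$ and $\gamma_t := c_x(t)$, so that $\hat{Y}_x^\lambda(t) = \alpha_t - \lambda \gamma_t$. The key observation is that for any two treatments $s,t \in \mathcal{T}$, the set of $\lambda$ on which $s$ is at least as good as $t$, namely $\{\lambda \geq 0 : \alpha_s - \lambda\gamma_s \geq \alpha_t - \lambda\gamma_t\}$, is the solution set of a single linear inequality in $\lambda$, and is therefore either empty, all of $\mathbb{R}_{\geq 0}$, or a half-line of the form $[0,\beta_{st}]$ or $[\beta_{st},\infty)$ with $\beta_{st} := (\alpha_s - \alpha_t)/(\gamma_s - \gamma_t)$ when $\gamma_s \neq \gamma_t$.

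The main step is then to argue that the region where a \emph{particular} treatment $t$ is optimal, $R_t := \{\lambda \geq 0 : \hat{Y}_x^\lambda(t) \geq \hat{Y}_x^\lambda(s) \ \forall s \in \mathcal{T}\}$, is an interval. This follows because $R_t = \bigcap_{s \in \mathcal{T}} \{\lambda \geq 0 : \alpha_t - \lambda\gamma_t \geq \alpha_s - \lambda\gamma_s\}$ is a finite intersection of the half-lines (or full/empty sets) described above, and a finite intersection of intervals in $\mathbb{R}_{\geq 0}$ is again an interval (possibly empty). Since $\mathcal{T}$ is finite, there are finitely many such regions $\{R_t\}_{t \in \mathcal{T}}$, and $\mathbb{R}_{\geq 0} = \bigcup_{t \in \mathcal{T}} R_t$ because for every $\lambda$ the $\arg\max$ is attained. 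Collecting all the finitely many breakpoints $\beta_{st}$ (with $\gamma_s \neq \gamma_t$) that lie in $\mathbb{R}_{\geq 0}$, sorting them, and including $0$ and $+\infty$, partitions $\mathbb{R}_{\geq 0}$ into finitely many subintervals; on the open interior of each such subinterval no pairwise comparison changes sign, so the $\arg\max$ is constant there, which is exactly the piecewise-constant claim.

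The one genuine subtlety — the main obstacle — is handling ties: at a breakpoint $\lambda = \beta_{st}$ two or more treatments achieve the maximum, so $t^*(\lambda)$ is only well-defined up to a tie-breaking rule, and one must be slightly careful that the partition statement is about the behavior on (relatively open) intervals rather than at isolated breakpoints. I would resolve this by either (i) fixing a deterministic tie-breaking order on $\mathcal{T}$ and noting that with a fixed rule $t^*(\lambda)$ is still constant on each open subinterval and the breakpoints form a measure-zero finite set, or (ii) simply stating the conclusion as: there is a finite partition of $\mathbb{R}_{\geq 0}$ into intervals such that the \emph{set} of optimal treatments is constant on each, which sidesteps the tie-breaking issue entirely. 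A secondary routine point is the degenerate case $\gamma_s = \gamma_t$, where the comparison $\alpha_s - \lambda\gamma_s \geq \alpha_t - \lambda\gamma_t$ reduces to $\alpha_s \geq \alpha_t$ independent of $\lambda$, contributing no breakpoint; this is immediate but should be mentioned so the intersection-of-intervals argument is airtight.
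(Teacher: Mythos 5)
Your proof is correct and takes essentially the same route as the paper's: both arguments rest on the observation that each $\hat{Y}_x^\lambda(t)$ is affine in $\lambda$, so the argmax is the upper envelope of finitely many lines, each pairwise comparison changes sign at most once, and hence each treatment is optimal on a single (possibly empty) interval delimited by the finitely many crossing points. Your explicit handling of ties and of the degenerate case $c_x(s)=c_x(t)$ is more careful than the paper's brief envelope argument, but it is a refinement of the same idea rather than a different approach.
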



\begin{proposition}[Affine Closure of Utility Function]
\label{prop:affine-closure}
Let $\lambda_j < \lambda_{j+1}$ be two adjacent cost values and let $\lambda \in [\lambda_j, \lambda_{j+1}]$. Suppose the utility function is affine in $\lambda$, i.e., $\hat{Y}_x^\lambda(t) = a_x(t) - \lambda \cdot c_x(t).$ Then for all $t \in \mathcal{T}$, the utility at $\lambda$ is a convex combination of utilities at the endpoints:
\[
\hat{Y}_x^\lambda(t) = \alpha \cdot \hat{Y}_x^{\lambda_j}(t) + (1 - \alpha) \cdot \hat{Y}_x^{\lambda_{j+1}}(t),
\quad \text{where } \alpha := \frac{\lambda_{j+1} - \lambda}{\lambda_{j+1} - \lambda_j}.
\]
\end{proposition}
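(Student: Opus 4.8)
The plan is to prove Proposition~\ref{prop:affine-closure} by direct substitution, treating it as an elementary algebraic identity about affine functions. The claim is that for an affine function of $\lambda$, the value at any point in an interval is the convex combination of the endpoint values with the natural barycentric weights.

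First I would verify that $\alpha$ as defined lies in $[0,1]$ whenever $\lambda \in [\lambda_j, \lambda_{j+1}]$, so that the phrase "convex combination" is justified: since $\lambda_j \le \lambda \le \lambda_{j+1}$ and $\lambda_{j+1} - \lambda_j > 0$, we have $0 \le \lambda_{j+1} - \lambda \le \lambda_{j+1} - \lambda_j$, hence $\alpha \in [0,1]$, and also $1 - \alpha = \dfrac{\lambda - \lambda_j}{\lambda_{j+1} - \lambda_j}$. Second, I would write out $\lambda$ itself as the convex combination $\lambda = \alpha \lambda_j + (1-\alpha)\lambda_{j+1}$; this follows by substituting the expressions for $\alpha$ and $1-\alpha$ and simplifying the numerator $(\lambda_{j+1}-\lambda)\lambda_j + (\lambda - \lambda_j)\lambda_{j+1} = \lambda(\lambda_{j+1} - \lambda_j)$.

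Third, I would plug this representation of $\lambda$ into the affine form $\hat{Y}_x^\lambda(t) = a_x(t) - \lambda \cdot c_x(t)$ and expand:
\begin{align*}
\hat{Y}_x^\lambda(t) &= a_x(t) - \bigl(\alpha \lambda_j + (1-\alpha)\lambda_{j+1}\bigr) c_x(t) \\
&= \alpha\bigl(a_x(t) - \lambda_j c_x(t)\bigr) + (1-\alpha)\bigl(a_x(t) - \lambda_{j+1} c_x(t)\bigr) \\
&= \alpha \cdot \hat{Y}_x^{\lambda_j}(t) + (1-\alpha) \cdot \hat{Y}_x^{\lambda_{j+1}}(t),
\end{align*}
where the middle step uses $\alpha + (1-\alpha) = 1$ to split the $a_x(t)$ term. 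This completes the argument for every $t \in \mathcal{T}$ simultaneously, since $t$ is held fixed throughout and only appears as a label on the coefficients $a_x(t), c_x(t)$.

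There is essentially no obstacle here: the result is a restatement of the defining property of affine functions, namely that they commute with convex (indeed affine) combinations of their argument. The only things worth being careful about are (i) the nondegeneracy $\lambda_j \ne \lambda_{j+1}$, which is guaranteed by the hypothesis $\lambda_j < \lambda_{j+1}$ so that $\alpha$ is well-defined, and (ii) making explicit that the identity holds uniformly in $t$, which matters because this lemma is the algebraic engine behind the interpolation architecture $f(x,\lambda)$ interpolating between $f_{\lambda_j}$ and $f_{\lambda_{j+1}}$. I would close by remarking that, combined with Proposition~\ref{prop:piecewise-policy}, this shows the optimal policy on the interval is determined entirely by the two endpoint utility vectors, justifying the two-endpoint parameterization.
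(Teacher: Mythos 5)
Your proof is correct and is essentially the same direct algebraic verification as the paper's: both reduce to the identity $\alpha \lambda_j + (1-\alpha)\lambda_{j+1} = \lambda$ and expand the affine form, with yours merely running the computation in the opposite direction (from $\hat{Y}_x^\lambda(t)$ to the convex combination rather than vice versa). The added checks that $\alpha \in [0,1]$ and that the identity holds uniformly in $t$ are welcome but not a substantive departure.
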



\begin{corollary}[Sufficiency of Two Models per Interval]
\label{cor:two-models}
Under the affine assumption, the utility $\hat{Y}_x^\lambda(t)$ for any $\lambda \in [\lambda_j, \lambda_{j+1}]$ can be exactly reconstructed using only the endpoints $\hat{Y}_x^{\lambda_j}(t)$ and $\hat{Y}_x^{\lambda_{j+1}}(t)$. Thus, it is sufficient to use only the two corresponding models $f_{\lambda_j}$ and $f_{\lambda_{j+1}}$ for interpolation within the interval.
\end{corollary}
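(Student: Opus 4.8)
The plan is to prove Corollary~\ref{cor:two-models} as an immediate consequence of Proposition~\ref{prop:affine-closure}, which already establishes the key algebraic identity. First I would recall that Proposition~\ref{prop:affine-closure} gives, for any $\lambda \in [\lambda_j, \lambda_{j+1}]$ and any $t \in \mathcal{T}$,
\[
\hat{Y}_x^\lambda(t) = \alpha \cdot \hat{Y}_x^{\lambda_j}(t) + (1-\alpha) \cdot \hat{Y}_x^{\lambda_{j+1}}(t),
\qquad \alpha = \frac{\lambda_{j+1} - \lambda}{\lambda_{j+1} - \lambda_j}.
\]
The right-hand side depends only on the two endpoint utility vectors $\hat{Y}_x^{\lambda_j}(\cdot)$ and $\hat{Y}_x^{\lambda_{j+1}}(\cdot)$ and on the scalar $\lambda$ (through $\alpha$), so the entire utility surface over the interval is determined by this finite amount of information. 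This establishes the ``exact reconstruction'' claim directly.

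Next I would translate this fact about utilities into a fact about routing policies. Since the routing decision is defined by $t^*(\lambda) = \arg\max_{t \in \mathcal{T}} \hat{Y}_x^\lambda(t)$, and the $\arg\max$ is a function of the utility vector alone, the reconstructed utilities yield the correct decision at every $\lambda$ in the interval. Concretely, for any $\lambda \in [\lambda_j, \lambda_{j+1}]$,
\[
t^*(\lambda) = \arg\max_{t \in \mathcal{T}} \left( \alpha \, \hat{Y}_x^{\lambda_j}(t) + (1-\alpha)\, \hat{Y}_x^{\lambda_{j+1}}(t) \right),
\]
which is computable purely from the outputs of $f_{\lambda_j}$ and $f_{\lambda_{j+1}}$ (which approximate these endpoint utilities, or more precisely induce the same $\arg\max$ structure). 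Hence no intermediate model $f_\lambda$ is needed: the pair $\{f_{\lambda_j}, f_{\lambda_{j+1}}\}$ carries all the information required to recover the optimal policy throughout $[\lambda_j, \lambda_{j+1}]$, which is exactly the statement that two models per interval suffice, and complements Proposition~\ref{prop:piecewise-policy} by showing the finitely many pieces can each be represented from endpoint data.

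There is essentially no hard part here — the corollary is a restatement of Proposition~\ref{prop:affine-closure} at the level of sufficiency of information, combined with the trivial observation that $\arg\max$ is a function of the utility vector. The only point requiring a sentence of care is the logical direction: Proposition~\ref{prop:affine-closure} asserts that $\hat{Y}_x^\lambda$ \emph{is} the convex combination, so reconstruction is exact (not merely approximate); I would make explicit that $\alpha$ ranges over $[0,1]$ as $\lambda$ ranges over $[\lambda_j, \lambda_{j+1}]$, so the convex-combination formula covers the whole interval. I would also remark briefly that this is precisely the algebraic fact the interval-conditioned architecture $f(x,\lambda) = \texttt{Linear}([f_{\lambda_j}(x), f_{\lambda_{j+1}}(x)] + g(\lambda))$ is built to exploit, tying the corollary back to the model design.
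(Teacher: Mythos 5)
Your proposal is correct and matches the paper's proof, which likewise derives the corollary as an immediate consequence of Proposition~\ref{prop:affine-closure}. The additional remarks about the $\arg\max$ and the range of $\alpha$ are fine elaborations but not needed beyond what the paper states.
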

\begin{proposition}[Expressivity of Additive Two-Model joint Architecture]
\label{prop:architecture-expressivity}
Let $\lambda \in [\lambda_j, \lambda_{j+1}]$, and suppose that for each $t \in \mathcal{T}$ the utility satisfies $\hat{Y}_x^\lambda(t) = a_x(t) - \lambda \cdot c_x(t)$. Then the optimal treatment $t^*(\lambda) := \arg\max_{t} \hat{Y}_x^\lambda(t)$ can be exactly represented by a softmax policy over a function of the form:
\[
f(x, \lambda) = \texttt{Linear} \left( [f_{\lambda_j}(x), f_{\lambda_{j+1}}(x)] + g(\lambda) \right),
\]
where $g(\lambda)$ is any differentiable embedding of $\lambda$, and $f_{\lambda_j}, f_{\lambda_{j+1}}$ are accurate predictors trained at endpoints $\lambda_j$ and $\lambda_{j+1}$.
\end{proposition}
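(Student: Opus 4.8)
\textbf{Proof proposal for Proposition~\ref{prop:architecture-expressivity}.}

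The plan is to show that the proposed architecture has enough expressive capacity to realize, for every $\lambda \in [\lambda_j, \lambda_{j+1}]$, a score vector whose argmax coincides with $t^*(\lambda)$. The starting point is Corollary~\ref{cor:two-models} (equivalently Proposition~\ref{prop:affine-closure}), which tells us that the target utility vector is the \emph{exact} convex combination $\hat{Y}_x^\lambda = \alpha(\lambda)\,\hat{Y}_x^{\lambda_j} + (1-\alpha(\lambda))\,\hat{Y}_x^{\lambda_{j+1}}$ with $\alpha(\lambda) = (\lambda_{j+1}-\lambda)/(\lambda_{j+1}-\lambda_j)$. Since $f_{\lambda_j}$ and $f_{\lambda_{j+1}}$ are assumed to be accurate predictors at the endpoints, there exist (fixed) output-head weights under which $f_{\lambda_j}(x)$ encodes $\hat{Y}_x^{\lambda_j}$ and $f_{\lambda_{j+1}}(x)$ encodes $\hat{Y}_x^{\lambda_{j+1}}$ (up to a positive scalar and additive constant, which do not affect the argmax). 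So the proof reduces to: can the map $[\,u,\,v\,] + g(\lambda) \mapsto \texttt{Linear}(\cdot)$ produce $\alpha(\lambda) u + (1-\alpha(\lambda)) v$ for the correct $\alpha(\lambda)$?

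The key steps, in order. First I would fix the linear output layer to act as a ``gated averaging'' map: writing the concatenated input as $[u; v] \in \mathbb{R}^{2|\mathcal{T}|}$, the layer $\texttt{Linear}$ has a weight matrix $W \in \mathbb{R}^{|\mathcal{T}| \times 2|\mathcal{T}|}$; I would show that choosing $W$ so that its action is $[u;v] \mapsto u - v$ plus a mechanism that injects $\alpha(\lambda)$ recovers what we need, by exploiting that $\alpha u + (1-\alpha) v = v + \alpha(u-v)$. The remaining ingredient is that $g(\lambda)$ must supply the scalar $\alpha(\lambda)$ (which is affine, hence differentiable, in $\lambda$) in the right coordinates so that the linear layer can multiply the difference $u-v$ by it. Here I would appeal to the universal-approximation / exact-representation capability of the learnable embedding $g$: since $\alpha(\lambda)$ is an affine function of $\lambda$, even a single linear layer inside $g$ suffices to represent it exactly, and the additive structure $[\,u,v\,] + g(\lambda)$ lets this interact with the $u,v$ channels. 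Second, I would note that scaling the resulting vector by the softmax temperature and taking the argmax yields exactly $t^*(\lambda) = \arg\max_t \hat Y_x^\lambda(t)$, because argmax is invariant to positive rescaling and to adding a constant vector, and the softmax concentrates on the argmax coordinate (this last point is essentially Proposition 2 applied pointwise, or simply the observation that a one-hot limit is attainable).

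The main obstacle — and the place where the argument needs the most care — is the interaction between the \emph{additive} combination $[f_{\lambda_j}(x), f_{\lambda_{j+1}}(x)] + g(\lambda)$ and the need for a \emph{multiplicative} gate $\alpha(\lambda)\cdot(u-v)$: a purely additive injection of $g(\lambda)$ followed by one linear layer cannot literally multiply two of its inputs. I expect the resolution to be one of: (i) reading the architecture as allowing $g(\lambda)$ to modulate the effective weights (i.e., the ``$+g(\lambda)$'' is interpreted so that the subsequent $\texttt{Linear}$ sees a $\lambda$-dependent bias that, after the argmax, is equivalent to the convex-combination weighting), or (ii) arguing that exact representation of $t^*(\lambda)$ only requires matching the \emph{sign pattern} of pairwise utility differences $\hat Y_x^\lambda(t) - \hat Y_x^\lambda(t')$, which are affine in $\lambda$, so an affine-in-$\lambda$ adjustment of the logits (achievable additively through $g(\lambda)$) suffices to flip the argmax at exactly the breakpoints identified in Proposition~\ref{prop:piecewise-policy}. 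I would pursue route (ii), since it sidesteps the multiplicative-gate issue entirely: the claim becomes that the finitely many breakpoints and the piecewise-constant labels between them can be reproduced by a score function that is affine in $\lambda$ on each pairwise comparison, which the $\texttt{Linear}([f_{\lambda_j}, f_{\lambda_{j+1}}] + g(\lambda))$ family contains by choosing $g$ affine and $W$ to form the appropriate differences. I would close by confirming that the softmax over this score, at any temperature, has its mode at $t^*(\lambda)$, completing the exact-representation claim.
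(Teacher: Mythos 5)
Your starting point coincides with the paper's: invoke Proposition~\ref{prop:affine-closure} to write $\hat{Y}_x^\lambda$ as the convex combination $\alpha(\lambda)\hat{Y}_x^{\lambda_j}+(1-\alpha(\lambda))\hat{Y}_x^{\lambda_{j+1}}$, and use the fact that softmax (and positive rescaling / constant shifts) preserve the argmax. Where you diverge is that you explicitly flag the central difficulty, which the paper's proof silently skips: the architecture combines $[f_{\lambda_j}(x),f_{\lambda_{j+1}}(x)]$ and $g(\lambda)$ \emph{additively} before a single fixed linear map, so the output decomposes as $W[u;v]+Wg(\lambda)$ and cannot literally form the $\lambda$-dependent product $\alpha(\lambda)\cdot(u-v)$. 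The paper's proof instead asserts that the linear layer "can match $\hat{Y}_x^\lambda(t)$ up to a scalar transformation" and that $g(\lambda)$ supplies "any additional monotonic reweighting of the interpolation" --- claims that, taken literally, require exactly the multiplicative gate you point out is absent. Your route (ii), reducing exact representation of $t^*(\lambda)$ to matching the sign pattern of the pairwise differences $\hat{Y}_x^\lambda(t)-\hat{Y}_x^\lambda(t')$ (affine in $\lambda$, with breakpoints as in Proposition~\ref{prop:piecewise-policy}), is a more honest path and is strictly more careful than what the paper offers.

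That said, route (ii) as you sketch it still has a gap you should make explicit before closing the argument: the $\lambda$-dependent correction $Wg(\lambda)$ is independent of $x$, so the affine-in-$\lambda$ logit adjustment you rely on is \emph{shared across all queries}, whereas the true pairwise differences have $x$-dependent slopes $-(c_x(t)-c_x(t'))$ and hence $x$-dependent breakpoints. For a single fixed query $x$ the sign-pattern argument goes through (choose the per-treatment affine corrections to reproduce that query's breakpoints), but uniform exactness over all $x$ requires either a multiplicative interaction between $g(\lambda)$ and the endpoint features or additional assumptions on what $f_{\lambda_j},f_{\lambda_{j+1}}$ encode. This limitation is inherited from the proposition itself and is present in the paper's proof as well; your writeup would be stronger for stating under which reading (pointwise in $x$) the exact-representation claim actually holds.
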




\paragraph{Implications for Architecture and Training.}
The theoretical results above provide strong justification for both the proposed model architecture and the associated training procedure. Proposition~\ref{prop:piecewise-policy} shows that the optimal treatment changes only across a small number of cost sensitivities, supporting our interval-conditioned strategy for routing. Proposition~\ref{prop:affine-closure} guarantees that the utility for any intermediate cost preference can be exactly recovered through convex interpolation of endpoint models. Finally, Proposition~\ref{prop:architecture-expressivity} establishes that our joint model  is expressive enough to capture the optimal policy within each interval. 
Our method provides a principled approach for learning a joint routing model that accommodates heterogeneous cost preferences from observational data.

\section{Experiments}\label{sec:exp}


\subsection{Datasets}

We evaluate our methods on two publicly available benchmarks for LLM routing: \textbf{RouterBench}~\citep{hu2024routerbench} and \textbf{SPROUT}~\citep{somerstep2025carrot}.

\textbf{RouterBench} is a standardized benchmark comprising 35{,}712 prompt–response pairs from 11 language models. The prompts are drawn from eight evaluation suites spanning reasoning, factual recall, dialogue, mathematics, and code generation. Each prompt is annotated with model accuracy and execution cost, enabling supervised training and evaluation of routing policies.

\textbf{SPROUT} is a larger and more diverse benchmark focused on cost-aware routing, consisting of 44{,}241 prompts and responses from 13 state-of-the-art LLMs. The prompts cover six challenging tasks: GPQA~\citep{rein2024gpqa}, MuSR~\citep{sprague2023musr}, MMLU-Pro~\citep{wang2024mmlu}, MATH~\citep{hendrycks2021measuring}, OpenHermes~\citep{teknium2023openhermes}, and RAGBench~\citep{friel2407ragbench}. SPROUT includes a predefined split: 80\% for training, with the remaining 20\% evenly divided between validation and test sets.




\subsection{Embeddings and Model Architecture}

To encode input queries into vector representations \(x\), we generate embeddings using two compact, publicly available language models: \texttt{BERT-base-uncased} (768 dimensions) and \texttt{Llama-3.2-1B} (2048 dimensions). Each input is passed once through the model, and the final hidden states are mean-pooled to obtain a fixed-length embeddings. These models were selected for their efficiency and suitability for real-time routing.

The embeddings are processed by a two-layer fully connected network with GELU activations and 200 hidden units per layer. The model is trained with the Adam optimizer (learning rate \(1 \times 10^{-4}\)) for up to 10{,}000 epochs, using early stopping with a patience of 100. A softmax output with temperature \(\tau = 100\) is used to control the sharpness of the output probabilities. This architecture is used consistently across all benchmarked methods for fair comparison. For the doubly robust estimator, the same network models the direct outcomes \(\hat{r}_t(x)\), while the propensity scores \(\hat{p}(t \mid x)\) are estimated using XGBoost. To reduce variance from extreme inverse propensity weights, we apply clipping at the 5th and 95th percentiles. The only architectural modification is for the interval-based model, where the softmax temperature is increased to \(\tau = 1000\) to enable smoother interpolation across \(\lambda\). Hyperparameters are summarized in Appendix~\ref{app:hyperparams}.


\subsection{Methods}

We evaluate our proposed routing strategies against a range of baselines from the causal machine learning and LLM routing literature. Since both SPROUT and RouterBench provide full-feedback datasets (i.e., responses from all models), we simulate observational data by sampling a single model per prompt. Specifically, for each prompt, we sample a model \( t \in \mathcal{T} \) with probability proportional to its accuracy $\mathbb{P}[t = \tau] = \frac{e^{a_{\tau}}}{\sum_{\tau' \in \mathcal{T}} e^{a_{\tau'}}},$ where \( a_{\tau} \) is the accuracy of model \( \tau \) on that prompt.

As an optimistic oracle, we include a \textbf{Full-Feedback} model that learns a model \( f: \mathcal{X} \to \mathbb{R}^{|\mathcal{T}|} \) using the complete outcome vector for each query and optimizes a standard multi-class classification loss. We benchmark against a common decoupled routing strategy denoted \textbf{Baseline}, which independently estimates model accuracy \( a_x(t) \) and cost \( c_x(t) \), without accounting for selection bias. This reflects the approach taken in prior predictive routing methods such as {CARROT}~\citep{somerstep2025carrot}, which has demonstrated superior performance over alternatives like RouteLLM~\citep{ong2406routellm} and RoRF~\citep{jain2023rorf}. To adjust for treatment assignment bias, we consider a \textbf{Regress-and-Compare (R\&C)} method, which fits outcome models \( \hat{Y}_x(t) \) for each treatment \( t \), and selects the action \( \hat{t} = \arg\max_{t} \hat{Y}_x(t) \). Building on this, we implement a \textbf{Causal-CARROT} variant by adapting both the parametric and kNN instantiations of CARROT to the R\&C framework. We additionally include \textbf{CF-Regression}, which models \( f: \mathcal{X} \to \mathbb{R}^{|\mathcal{T}|} \) and is trained to minimize MSE against the counterfactual utility function from the doubly robust estimator: $\min_f \ \sum_{i=1}^n \sum_{t=1}^{|\mathcal{T}|} ( \hat{Y}_{x_i}(t) - f(x_i)_t )^2.$ Decisions are made by selecting the treatment with the highest predicted value, i.e., \( \hat{t} = \arg\max_t f(x)_t \). 



Finally, we evaluate our regret-minimization methods. \textbf{RM-Classification} formulates the task as multi-class prediction over optimal treatments, serving as a classification-based upper bound. \textbf{RM-Softmax} directly minimizes a softmax-weighted regret surrogate. In the heterogeneous preference setting, we also assess \textbf{RM-Interval} (Section~\ref{sec:multi_task}), which generalizes across a continuum of cost sensitivities by interpolating between models trained at discrete \( \lambda \) values.

\begin{table}[h]
\centering
\caption{Comparison of routing methods by causal reasoning and end-to-end training.}
\label{tab:causal_end2end}
\begin{tabular}{lcc}
\toprule
\textbf{Method} & \textbf{Causal} & \textbf{End-to-End} \\
\midrule
Full-Feedback     & \cmark & \cmark \\ \hdashline
Baseline      & \xmark & \xmark \\
R\&C  & \cmark & \xmark \\
 
Causal-CARROT (kNN \& Embed)             & \cmark & \xmark \\
CF-Regression         & \cmark & \xmark \\ \hdashline
RM-Classification (Ours)    & \cmark & \cmark \\
RM-Softmax (Ours)            & \cmark & \cmark \\
RM-Interval  (Ours)     & \cmark & \cmark \\
\bottomrule
\end{tabular}
\end{table}
\subsection{Evaluations}



We evaluate our methods in two settings. In the \(\lambda\)-specific setting, each cost sensitivity value \(\lambda \in \{0, 100, \dots, 1000\}\) defines a separate routing task, with models trained and evaluated independently. In the heterogeneous preference setting, \textbf{RM-Interval} is trained on a subset \(\{0, 200, \dots, 1000\}\) and tested on held-out values \(\{100, 300, 500, 700, 900\}\) to assess generalization across cost sensitivities.


We report the average utility across 10 independent trials for each routing method for SPROUT dataset and BERT embeddings in Table~\ref{tab:sprout_BERT_main}, where each trial involves randomly sampling observational data and retraining all models. In Figure~\ref{fig:BERT_sprout}, we also visualize the corresponding accuracy–cost curve associated with each method. Additional plots for the rest of the datasets and embeddings, along with detailed router performance 
are provided in Appendix~\ref{app:single_lambda_results}.

\begin{figure}[h]
\centering
\includegraphics[width=0.94\linewidth]{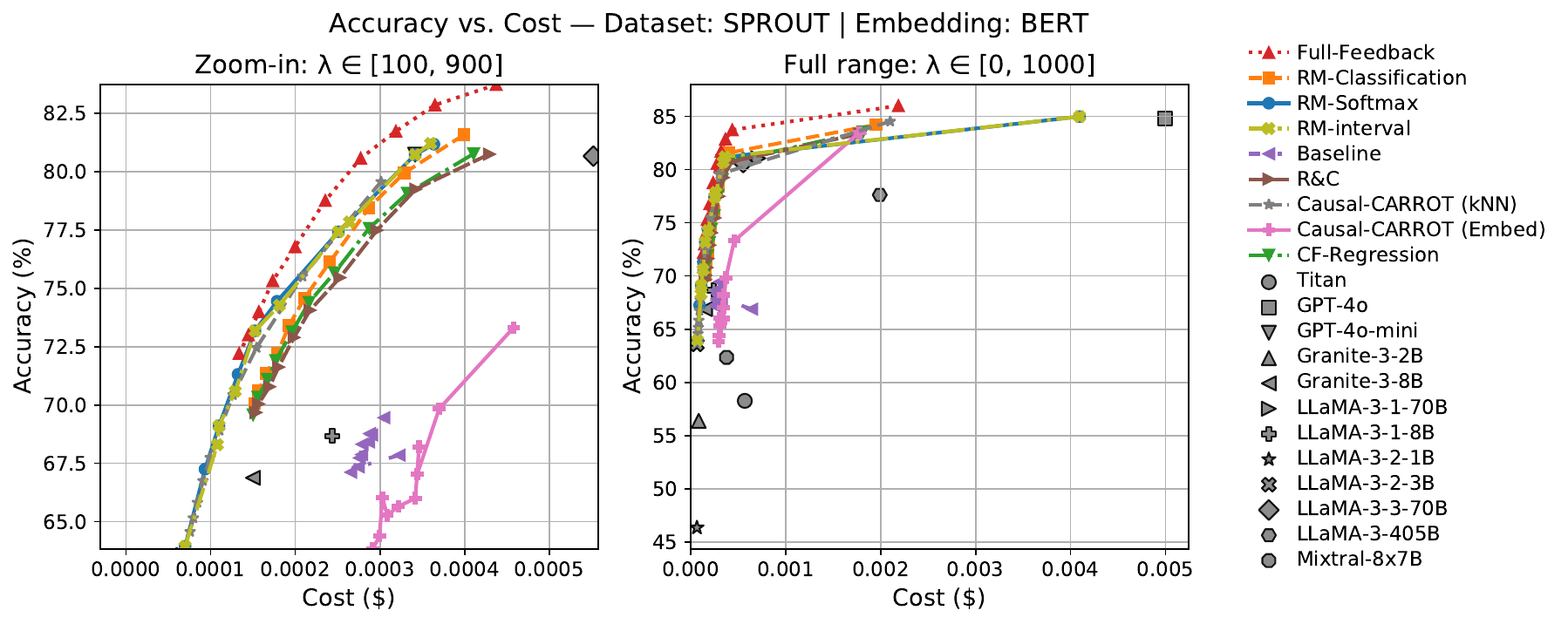}
\caption{Accuracy–cost trade-off curve for SPROUT with BERT embeddings.}
\label{fig:BERT_sprout}
\end{figure}

\begin{table}[h]
\centering
\caption{Utility on \textbf{SPROUT} with \texttt{BERT} embeddings. \textit{Full Feedback} serves as an oracle upper bound. The best-performing method for each column is highlighted in bold.}
\label{tab:sprout_BERT_main}
\resizebox{\linewidth}{!}{%
\begin{tabular}{lcccccc}
\toprule
\textbf{Method} & $\lambda=0$ & $\lambda=100$ & $\lambda=200$ & $\lambda=300$ & $\lambda=400$ & $\lambda=500$ \\
\midrule
Full-Feedback & \textit{85.99}$_{\pm0.17}$ & \textit{79.34}$_{\pm0.13}$ & \textit{75.55}$_{\pm0.29}$ & \textit{72.16}$_{\pm0.24}$ & \textit{69.47}$_{\pm0.23}$ & \textit{66.97}$_{\pm0.37}$ \\ \hdashline
Baseline & 66.88$_{\pm1.55}$ & 64.62$_{\pm0.66}$ & 61.85$_{\pm0.78}$ & 59.15$_{\pm0.84}$ & 56.73$_{\pm0.67}$ & 54.08$_{\pm0.57}$ \\
R\&C & 83.34$_{\pm0.32}$ & 76.45$_{\pm0.53}$ & 72.40$_{\pm0.56}$ & 68.59$_{\pm0.69}$ & 65.34$_{\pm0.66}$ & 63.19$_{\pm0.83}$ \\
Causal-CARROT (kNN) & 84.52$_{\pm0.35}$ & 76.55$_{\pm0.37}$ & 71.34$_{\pm0.23}$ & 67.82$_{\pm0.43}$ & 65.38$_{\pm0.40}$ & 63.38$_{\pm0.43}$ \\
Causal-CARROT (EmbedNet) & 83.46$_{\pm0.39}$ & 68.73$_{\pm0.74}$ & 62.44$_{\pm2.40}$ & 56.72$_{\pm1.68}$ & 54.37$_{\pm2.12}$ & 48.91$_{\pm2.41}$ \\
CF-Regression & 83.54$_{\pm0.25}$ & 76.65$_{\pm0.46}$ & 72.42$_{\pm0.53}$ & 68.95$_{\pm0.60}$ & 65.81$_{\pm0.67}$ & 63.58$_{\pm0.67}$ \\
RM-Classification & 84.20$_{\pm0.24}$ & \textbf{77.58}$_{\pm0.34}$ & 73.36$_{\pm0.49}$ & 69.84$_{\pm0.48}$ & 66.49$_{\pm0.63}$ & 64.03$_{\pm1.02}$ \\
RM-Softmax & \textbf{84.97}$_{\pm0.39}$ & 77.53$_{\pm0.81}$ & \textbf{73.89}$_{\pm0.00}$ & \textbf{70.47}$_{\pm0.00}$ & \textbf{67.38}$_{\pm0.47}$ & \textbf{65.51}$_{\pm0.60}$ \\ \hdashline
RM-Interval & \textbf{84.97}$_{\pm0.39}$ & 77.60$_{\pm0.62}$ & \textbf{73.89}$_{\pm0.00}$ & 69.92$_{\pm0.57}$ & \textbf{67.38}$_{\pm0.47}$ & 65.20$_{\pm0.67}$ \\
\midrule
\textbf{Method} & $\lambda=600$ & $\lambda=700$ & $\lambda=800$ & $\lambda=900$ & $\lambda=1000$ & \\
\midrule
Full-Feedback & \textit{64.79}$_{\pm0.31}$ & \textit{63.18}$_{\pm0.21}$ & \textit{61.43}$_{\pm0.27}$ & \textit{59.98}$_{\pm0.30}$ & \textit{58.83}$_{\pm0.24}$ & \\ \hdashline
Baseline & 51.17$_{\pm0.54}$ & 48.79$_{\pm0.67}$ & 45.74$_{\pm1.21}$ & 42.58$_{\pm1.36}$ & 38.97$_{\pm2.65}$ & \\
R\&C & 60.98$_{\pm0.81}$ & 59.01$_{\pm0.86}$ & 57.21$_{\pm0.99}$ & 55.97$_{\pm1.11}$ & 54.33$_{\pm1.37}$ & \\
Causal-CARROT (kNN) & 61.77$_{\pm0.43}$ & 60.39$_{\pm0.38}$ & 59.07$_{\pm0.37}$ & 57.99$_{\pm0.35}$ & 56.99$_{\pm0.34}$ & \\
Causal-CARROT (EmbedNet) & 46.35$_{\pm1.53}$ & 43.66$_{\pm3.04}$ & 41.81$_{\pm2.34}$ & 37.42$_{\pm5.86}$ & 34.69$_{\pm4.44}$ & \\
CF-Regression & 61.37$_{\pm0.64}$ & 59.52$_{\pm0.54}$ & 57.72$_{\pm0.72}$ & 56.31$_{\pm0.66}$ & 54.56$_{\pm0.71}$ & \\
RM-Classification & 61.84$_{\pm1.13}$ & 59.68$_{\pm1.48}$ & 58.09$_{\pm1.22}$ & 56.52$_{\pm1.63}$ & 54.85$_{\pm1.76}$ & \\
RM-Softmax & \textbf{64.03}$_{\pm0.39}$ & \textbf{62.04}$_{\pm1.17}$ & \textbf{60.32}$_{\pm1.31}$ & \textbf{58.85}$_{\pm1.13}$ & \textbf{56.95}$_{\pm0.55}$ & \\ \hdashline
RM-Interval & \textbf{64.03}$_{\pm0.39}$ & 61.54$_{\pm1.31}$ & \textbf{60.32}$_{\pm1.31}$ & 58.58$_{\pm1.15}$ & \textbf{56.95}$_{\pm0.55}$ & \\
\bottomrule
\end{tabular}
}
\end{table}

Our RM-based approaches consistently deliver the strongest performance overall, with \textbf{RM-Softmax} and \textbf{RM-Interval} standing out for both high utility and low variance. Notably, \textbf{RM-Interval} generalizes remarkably well to unseen budget levels (i.e., odd  $\lambda$ values), many times even outperforming models trained specifically on those points. These results underscore the effectiveness of our regret-minimization framework in both fixed and variable cost settings.

The standard \textbf{Baseline} method, which reflects the common decoupled approach used in prior work and ignores treatment selection bias, performs the worst across most values of $\lambda$, underscoring the importance of accounting for treatment bias in observational data. Notably,  the simple \textbf{R\&C} method and the \textbf{Causal-CARROT} variants (which incorporate causal corrections) achieve substantial improvements over it, validating our claim that bias-aware routing significantly improves performance.

The performance gap between \textbf{CF-Regression} and our RM-based methods demonstrates the benefit of an integrated end-to-end approach. Whereas \textbf{CF-Regression} focuses on approximating the counterfactual utility function and then selecting the best model based on predicted outcomes, our methods directly minimize regret, leading to superior and more stable results.
Comparing our two surrogate formulations, \textbf{RM-Softmax} generally outperforms \textbf{RM-Classification} in both utility and variance, indicating the advantage of optimizing a differentiable surrogate objective. Finally, among the \textbf{Causal-CARROT} variants, the {kNN} version consistently outperforms the EmbedNet variant, suggesting that non-parametric estimators may offer greater robustness in this setting.



\section{Conclusion}

We propose a causal end-to-end framework for routing queries to LLMs under observational data. Our approach introduces a regret-minimizing objective grounded in counterfactual estimation, enabling principled policy learning that accounts for treatment selection bias without requiring full-feedback data. Unlike prior approaches that rely on decoupled prediction of accuracy and cost, where errors can compound, our method directly optimizes the decision objective. To support heterogeneous user preferences, we develop an interval-conditioned routing architecture that generalizes across a continuum of cost-sensitivity parameters. Theoretical analysis provides guarantees on interpolation sufficiency and regret bounds, while empirical evaluations on public routing benchmarks demonstrate that our methods consistently outperform strong baselines, including recent routing algorithms, across multiple embedding models. Future work includes extending the framework to accommodate additional user-defined metrics or hard constraints that cannot be readily incorporated as soft penalties in the objective. Another promising direction is to explore online or adaptive routing in dynamic environments, as well as extending causal regret minimization to multi-turn settings.

\nocite{*}
\bibliographystyle{plainnat}
\bibliography{references}

\newpage
\appendix
\title{app}

\setcounter{proposition}{0}
\setcounter{corollary}{0}

\renewcommand{\theproposition}{\arabic{proposition}}
\renewcommand{\thecorollary}{\arabic{corollary}}

\section{Proofs of Propositions}

\begin{proposition}
Suppose the estimated utility function $\hat{Y}_x : \mathcal{T} \to \mathbb{R}$ is $L$-Lipschitz continuous over the probability simplex with respect to the $\ell_1$ norm, as in Definition~\ref{def:lip}. Then, for a policy $f : \mathcal{X} \to \Delta^{|\mathcal{T}|}$ that outputs a distribution $f(x)$ over $\mathcal{T}$, the regret can be upper bounded by:
\begin{align}
    \text{Regret}(f) \leq L \cdot \frac{1}{n} \sum_{i=1}^n \sqrt{2 \cdot \text{CE}(t_i^*, f(x_i))},
\end{align}
where $t_i^* := \arg\max_{t \in \mathcal{T}} \hat{Y}_{x_i}(t)$ is the optimal treatment for input $x_i$, and $\text{CE}(t_i^*, f(x_i)) := -\log f(x_i)_{t_i^*}$ denotes the cross-entropy loss.
\end{proposition}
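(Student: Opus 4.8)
The plan is to bound the empirical regret in Equation~\eqref{eq:approx} term by term and then average. First I would observe that, under the simplex extension of Definition~\ref{def:lip}, the estimated optimal utility $\hat{Y}_{x_i}(t_i^*)$ equals $\hat{Y}_{x_i}(e_{t_i^*})$, where $e_{t_i^*}\in\Delta^{|\mathcal{T}|}$ is the one-hot vector supported on $t_i^*$, while $\hat{Y}_{x_i}(f(x_i))$ is the value at the distribution $f(x_i)$. Hence each summand is $\hat{Y}_{x_i}(e_{t_i^*})-\hat{Y}_{x_i}(f(x_i)) \le |\hat{Y}_{x_i}(e_{t_i^*})-\hat{Y}_{x_i}(f(x_i))|$, and by the $L$-Lipschitz assumption with respect to the $\ell_1$ norm this is at most $L\,\|e_{t_i^*}-f(x_i)\|_1$.

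The core step is to control $\|e_{t_i^*}-f(x_i)\|_1$ by the cross-entropy. I would invoke Pinsker's inequality, which for probability vectors $p,q$ states $\tfrac12\|p-q\|_1 \le \sqrt{\tfrac12\,\mathrm{KL}(p\|q)}$, i.e.\ $\|p-q\|_1\le\sqrt{2\,\mathrm{KL}(p\|q)}$. Taking $p=e_{t_i^*}$ and $q=f(x_i)$, the KL divergence collapses to a single term, $\mathrm{KL}(e_{t_i^*}\|f(x_i)) = -\log f(x_i)_{t_i^*} = \mathrm{CE}(t_i^*,f(x_i))$, since all the mass of $p$ sits on the coordinate $t_i^*$. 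This yields $\|e_{t_i^*}-f(x_i)\|_1 \le \sqrt{2\,\mathrm{CE}(t_i^*,f(x_i))}$.

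Combining the two bounds gives $\hat{Y}_{x_i}(t_i^*)-\hat{Y}_{x_i}(f(x_i)) \le L\sqrt{2\,\mathrm{CE}(t_i^*,f(x_i))}$ for every $i$; averaging over $i=1,\dots,n$ (or taking the expectation over $X$ for the population version of the regret) produces the claimed bound. The only nontrivial ingredient is Pinsker's inequality, so the ``hard part'' is really just recognizing that the problem reduces to a total-variation/KL comparison and being careful about the factor-of-two normalization between $\|\cdot\|_1$ and total variation distance; everything else is a one-line Lipschitz estimate. If one prefers to avoid citing Pinsker, the same bound $\|p-q\|_1^2\le 2\,\mathrm{KL}(p\|q)$ can be re-derived directly, but that is a routine computation I would relegate to a footnote.
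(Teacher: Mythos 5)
Your proposal is correct and follows essentially the same argument as the paper's proof: bound each regret summand by $L\,\|e_{t_i^*}-f(x_i)\|_1$ via the Lipschitz assumption, then apply Pinsker's inequality and the collapse of $\mathrm{KL}(e_{t_i^*}\,\|\,f(x_i))$ to the cross-entropy term. Your added care about the factor-of-two normalization between $\ell_1$ distance and total variation is a welcome touch but does not change the substance.
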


\begin{proof}
Let $e_{t_i^*} \in \Delta^{|\mathcal{T}|}$ denote the one-hot distribution over the optimal treatment $t_i^*$. By the definition of regret:
\begin{align}
\text{Regret}(f) &= \frac{1}{n} \sum_{i=1}^n \left[ \hat{Y}_{x_i}(e_{t_i^*}) - \hat{Y}_{x_i}(f(x_i)) \right].
\end{align}
\noindent Using the $L$-Lipschitz continuity of $\hat{Y}_{x_i}(\cdot)$ under the $\ell_1$ norm:
\begin{align}
\left| \hat{Y}_{x_i}(e_{t_i^*}) - \hat{Y}_{x_i}(f(x_i)) \right| &\leq L \cdot \|e_{t_i^*} - f(x_i)\|_1.
\end{align}
\noindent Applying Pinsker’s inequality:
\begin{align}
\|e_{t_i^*} - f(x_i)\|_1 &\leq \sqrt{2 \cdot \text{KL}(e_{t_i^*} \,\|\, f(x_i))} = \sqrt{2 \cdot \text{CE}(t_i^*, f(x_i))},
\end{align}
where the equality follows because KL divergence from a one-hot distribution to a probability vector reduces to cross-entropy. Combining the above:
\begin{align}
\text{Regret}(f) &\leq L \cdot \frac{1}{n} \sum_{i=1}^n \sqrt{2 \cdot \text{CE}(t_i^*, f(x_i))},
\end{align}
which completes the proof.
\end{proof}

\begin{proposition}
Let $f: \mathcal{X} \to \mathbb{R}^{|\mathcal{T}|}$ be a neural network whose output is passed through a softmax layer with fixed temperature $\tau > 0$, and define $t^*_i := \arg\max_{t \in \mathcal{T}} \hat{Y}_{x_i}(t)$. Then, optimizing  the following objective using gradient descent 
\begin{align}
    \min_f \frac{1}{n} \sum_{i=1}^n \left( \hat{Y}_{x_i}(t^*_i) - \sum_{t=1}^{|\mathcal{T}|} \hat{Y}_{x_i}(t) \cdot \mathrm{softmax}(f(x_i))_t \right)
\end{align}
leads the model $f$ to place all probability mass on the optimal treatment $t^*_i$. That is, at convergence,
\begin{align}
    \mathrm{softmax}(f(x_i))_t \to 
    \begin{cases}
        1 & \text{if } t = t^*_i, \\
        0 & \text{otherwise}.
    \end{cases}
\end{align}
\end{proposition}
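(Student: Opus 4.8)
The plan is to observe that the objective decomposes across the $n$ samples, so it suffices to analyze a single term $\hat{Y}_{x_i}(t^*_i) - \sum_{t} \hat{Y}_{x_i}(t)\,\mathrm{softmax}(f(x_i))_t$. Writing $p_t := \mathrm{softmax}(f(x_i))_t$, the per-sample loss is $\hat{Y}_{x_i}(t^*_i) - \sum_t p_t \hat{Y}_{x_i}(t) = \sum_t p_t\bigl(\hat{Y}_{x_i}(t^*_i) - \hat{Y}_{x_i}(t)\bigr)$, using $\sum_t p_t = 1$. Every coefficient $\hat{Y}_{x_i}(t^*_i) - \hat{Y}_{x_i}(t)$ is nonnegative by the definition of $t^*_i$, and is strictly positive for $t \neq t^*_i$ provided the maximizer is unique (I would state this uniqueness as a mild assumption, or note that ties can be broken arbitrarily without affecting optimality of the resulting policy). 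Hence the per-sample loss is a nonnegative weighted average of the utility gaps, it is bounded below by $0$, and that lower bound is attained if and only if $p_t = 0$ for all $t$ with a strictly positive gap, i.e. if and only if $p$ puts all its mass on $t^*_i$. This identifies the global minimizer over the simplex.

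Next I would connect this to gradient descent on $f$. Since the loss is linear in $p$ and the softmax map is smooth, the composite objective is a smooth function of the network parameters. I would compute the gradient of the per-sample loss with respect to the logits $z := f(x_i)/\tau$: using the standard softmax Jacobian $\partial p_t / \partial z_s = p_t(\mathbbm{1}[t=s] - p_s)$, one gets $\partial \ell_i / \partial z_s = -p_s\bigl(\hat{Y}_{x_i}(s) - \sum_t p_t \hat{Y}_{x_i}(t)\bigr) = -p_s\bigl(\hat{Y}_{x_i}(s) - \bar{Y}\bigr)$ where $\bar{Y}$ is the current expected utility under $p$. So the negative gradient increases the logit of any treatment whose utility exceeds the current mean and decreases the others; in particular the logit of $t^*_i$ is always pushed up relative to the mean (its gap to the mean is the largest and nonnegative), while any suboptimal $s$ with $\hat Y_{x_i}(s)<\bar Y$ is pushed down. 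I would argue that the only stationary points of this dynamics in logit space correspond to $p$ supported on utility-maximizing treatments, and that the loss is monotonically decreasing along the gradient flow, so it converges to the infimum $0$, which by the first paragraph forces $p \to e_{t^*_i}$. Because the problem separates over $i$ (or because a sufficiently expressive $f$ can realize the pointwise-optimal logits for every $x_i$ simultaneously), this holds for every sample.

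The main obstacle I anticipate is making the convergence claim rigorous rather than heuristic: the infimum $0$ is not attained at any finite logit vector (it requires $z_{t^*_i} \to +\infty$), so strictly speaking $f$ does not ``reach'' the one-hot distribution but approaches it in the limit, and one must rule out gradient descent stalling at a spurious stationary point or a plateau. I would handle this by noting the objective restricted to the simplex is linear hence has no interior stationary point, the softmax parametrization has no bad critical points away from the optimal face (the gradient in logit space vanishes only when $\hat Y_{x_i}(s)=\bar Y$ for every $s$ with $p_s>0$, i.e. on the optimal face), and the loss is bounded below, so gradient descent with appropriate step size drives the loss to its infimum and the iterates' softmax outputs to the optimal vertex. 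I would also explicitly flag the unique-argmax assumption, since with exact ties the limiting distribution is any point on the corresponding face of the simplex, all of which are still optimal decisions. Everything else — the decomposition, the nonnegativity of the gaps, the softmax gradient computation — is routine.
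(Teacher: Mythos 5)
Your proposal is correct and follows essentially the same route as the paper's proof: both reduce the objective to maximizing $\langle \hat{Y}_{x_i}, \mathrm{softmax}(f(x_i))\rangle$, compute the softmax Jacobian, and argue that gradient descent drives the output distribution to the vertex $e_{t_i^*}$. If anything you are more careful than the paper --- you explicitly identify the global minimizer on the simplex, note that the infimum is attained only as the optimal logit diverges, and flag the unique-argmax caveat --- though both arguments leave the same step informal, namely rigorously excluding asymptotic plateaus near suboptimal vertices, where the logit-space gradient also decays because the corresponding $p_s$ vanish (your ``only on the optimal face'' characterization of the vanishing set is slightly too generous).
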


\begin{proof}
Let $\hat{Y}_{x_i} \in \mathbb{R}^{|\mathcal{T}|}$ denote the vector of estimated potential outcomes for input $x_i$, and let $f(x_i) \in \mathbb{R}^{|\mathcal{T}|}$ be the output of the neural network before the softmax layer. The objective for a single instance $x_i$ can be written as minimizing the regret surrogate:
\begin{align}
    \hat{Y}_{x_i}(t^*_i) - \sum_{t=1}^{|\mathcal{T}|} \hat{Y}_{x_i}(t) \cdot \mathrm{softmax}(f(x_i))_t.
\end{align}
This is equivalent to maximizing the inner product:
\begin{align}
    \langle \hat{Y}_{x_i}, \mathrm{softmax}(f(x_i)) \rangle.
\end{align}

Let us denote $p := \mathrm{softmax}(f(x_i)) \in \Delta^{|\mathcal{T}| - 1}$, the probability simplex. We now show that the inner product $\langle \hat{Y}_{x_i}, p \rangle$ increases at each gradient step. Since $p = \mathrm{softmax}(f(x_i))$, we can compute the gradient of the objective with respect to $f(x_i)$ as:
\begin{align}
    \nabla_{f(x_i)} \langle \hat{Y}_{x_i}, \mathrm{softmax}(f(x_i)) \rangle
    = J_{\mathrm{softmax}}(f(x_i))^\top \hat{Y}_{x_i},
\end{align}
where $J_{\mathrm{softmax}}(f(x_i))$ is the Jacobian of the softmax function, given by:
\begin{align}
    J_{\mathrm{softmax}}(f(x_i))_{t,s} = \frac{\partial \, \mathrm{softmax}(f(x_i))_t}{\partial f(x_i)_s}
    = \mathrm{softmax}(f(x_i))_t \left(\delta_{t,s} - \mathrm{softmax}(f(x_i))_s \right).
\end{align}

This gradient direction corresponds to increasing the logit value of actions with higher $\hat{Y}_{x_i}(t)$ and decreasing those with lower values, pushing the softmax distribution toward the mode of $\hat{Y}_{x_i}$. In other words, the gradient ascent step increases the inner product at each iteration $k$:
\begin{align}
    \langle \hat{Y}_{x_i}, \mathrm{softmax}(f(x_i)) \rangle^{(k+1)} 
    > \langle \hat{Y}_{x_i}, \mathrm{softmax}(f(x_i)) \rangle^{(k)}.
\end{align}

Since $\hat{Y}_{x_i}$ is fixed and the softmax is smooth and bounded, this sequence is monotonically increasing and converges to the maximum possible value:
\begin{align}
    \langle \hat{Y}_{x_i}, \mathrm{softmax}(f(x_i)) \rangle \to \max_t \hat{Y}_{x_i}(t) = \hat{Y}_{x_i}(t^*_i),
\end{align}
which implies:
\begin{align}
    \mathrm{softmax}(f(x_i))_t \to 
    \begin{cases}
        1 & \text{if } t = t^*_i, \\
        0 & \text{otherwise}.
    \end{cases}
\end{align}

Thus, the regret surrogate converges to zero:
\begin{align}
    \hat{Y}_{x_i}(t^*_i) - \langle \hat{Y}_{x_i}, \mathrm{softmax}(f(x_i)) \rangle \to 0,
\end{align}
and the learned policy selects the treatment maximizing the estimated outcome.
\end{proof}

\begin{proposition}[Piecewise Constant Optimal Policy]
Fix a query $x \in \mathcal{X}$ and assume the estimated utility function is affine in $\lambda$, i.e., $\hat{Y}_x^\lambda(t) = a_x(t) - \lambda \cdot c_x(t)$ for all $t \in \mathcal{T}$. Then the optimal treatment
\[
t^*(\lambda) := \arg\max_{t \in \mathcal{T}} \hat{Y}_x^\lambda(t)
\]
is piecewise constant in $\lambda$. That is, the budget space $\mathbb{R}_{\geq 0}$ can be partitioned into intervals over which the optimal treatment remains fixed.
\end{proposition}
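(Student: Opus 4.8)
The plan is to exploit the fact that, for a fixed query $x$, each map $\lambda \mapsto \hat{Y}_x^\lambda(t) = a_x(t) - \lambda\, c_x(t)$ is an affine function of the single scalar $\lambda$, so that $t^*(\lambda)$ is simply the index that attains the upper envelope of $|\mathcal{T}|$ lines — a standard piecewise-linear object with finitely many breakpoints.

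First I would reduce the problem to pairwise comparisons. For each ordered pair $t, t' \in \mathcal{T}$ with $t \neq t'$, the difference
\[
\hat{Y}_x^\lambda(t) - \hat{Y}_x^\lambda(t') = \big(a_x(t) - a_x(t')\big) - \lambda\big(c_x(t) - c_x(t')\big)
\]
is affine in $\lambda$, so the set $\{\lambda : \hat{Y}_x^\lambda(t) \geq \hat{Y}_x^\lambda(t')\}$ is either all of $\mathbb{R}$, empty, or a closed half-line with endpoint the threshold $\lambda_{t,t'} := \frac{a_x(t) - a_x(t')}{c_x(t) - c_x(t')}$, which is well defined whenever $c_x(t) \neq c_x(t')$. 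Collecting the finitely many nonnegative thresholds, of which there are at most $\binom{|\mathcal{T}|}{2}$, together with $0$, and sorting them as $0 = \beta_0 < \beta_1 < \dots < \beta_K < \beta_{K+1} := \infty$, yields a finite partition of $\mathbb{R}_{\geq 0}$ into the intervals $[\beta_j, \beta_{j+1})$.

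Next I would argue that on each open interval $(\beta_j, \beta_{j+1})$ no two of the functions $\{\hat{Y}_x^\lambda(t)\}_{t \in \mathcal{T}}$ cross, since every crossing point is one of the $\beta$'s; hence the ordering of the utilities, and in particular the maximizing index, is the same for every $\lambda$ in that interval. Fixing any deterministic tie-breaking rule (e.g.\ smallest index) then makes $t^*(\cdot)$ literally constant on $(\beta_j, \beta_{j+1})$, and the boundary points $\beta_j$ can be attached to either neighboring interval. An equivalent route, which I might present instead, is to observe that $M(\lambda) := \max_{t \in \mathcal{T}} \hat{Y}_x^\lambda(t)$ is a pointwise maximum of finitely many affine functions, hence convex and piecewise linear with at most $|\mathcal{T}|$ linear pieces, and on the relative interior of each linear piece the affine function — and therefore the index — achieving the maximum is unique and constant.

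I do not expect a genuine obstacle here; the argument is elementary. The only points requiring care are (i) that $\arg\max$ may be set-valued at the breakpoints, which is handled by committing to a fixed tie-breaking convention so that the statement ``piecewise constant'' is well posed, and (ii) checking that the number of breakpoints is finite, which is immediate from the $\binom{|\mathcal{T}|}{2}$ bound on pairwise thresholds. Restricting to $\mathbb{R}_{\geq 0}$ merely discards the negative thresholds and changes nothing.
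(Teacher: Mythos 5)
Your proof is correct and follows essentially the same route as the paper: both rest on the observation that the utilities are affine in $\lambda$, so the maximizer is the index attaining the upper envelope of finitely many lines, which can change only at finitely many crossing points. Your version is slightly more careful (explicit $\binom{|\mathcal{T}|}{2}$ bound on pairwise thresholds and a tie-breaking convention at breakpoints), but the underlying argument is the same.
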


\begin{proof}
For fixed $x$, each $\hat{Y}_x^\lambda(t)$ is an affine function of $\lambda$. The pointwise maximum of a finite collection of affine functions is piecewise affine, and the argmax corresponds to the highest line at each $\lambda$. Since each pair of lines can intersect at most once, the number of intervals over which a single treatment is optimal is bounded by $|\mathcal{T}| - 1$. Therefore, $t^*(\lambda)$ changes only at these intersection points and remains constant within each interval.
\end{proof}

\begin{proposition}[Affine Closure of Utility Function]
Let $\lambda_j < \lambda_{j+1}$ be two adjacent budget values and let $\lambda \in [\lambda_j, \lambda_{j+1}]$. Suppose the utility function is affine in $\lambda$:
\[
\hat{Y}_x^\lambda(t) = a_x(t) - \lambda \cdot c_x(t).
\]
Then for all $t \in \mathcal{T}$, the utility at $\lambda$ is a convex combination of utilities at the endpoints:
\[
\hat{Y}_x^\lambda(t) = \alpha \cdot \hat{Y}_x^{\lambda_j}(t) + (1 - \alpha) \cdot \hat{Y}_x^{\lambda_{j+1}}(t),
\quad \text{where } \alpha := \frac{\lambda_{j+1} - \lambda}{\lambda_{j+1} - \lambda_j}.
\]
\end{proposition}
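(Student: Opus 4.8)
The plan is to verify the claimed identity by direct substitution, exploiting the fact that an affine function of $\lambda$ is completely determined by its values at any two points, and that evaluating it at a convex combination of those points yields the corresponding convex combination of the values. Since $\hat{Y}_x^\lambda(t) = a_x(t) - \lambda\, c_x(t)$ is affine in $\lambda$ for each fixed $t$ and $x$, this is essentially the one-dimensional linear interpolation formula.

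Concretely, first I would write $\lambda$ explicitly as the convex combination of the endpoints dictated by $\alpha$: from $\alpha = \frac{\lambda_{j+1} - \lambda}{\lambda_{j+1} - \lambda_j}$ one checks that $1 - \alpha = \frac{\lambda - \lambda_j}{\lambda_{j+1} - \lambda_j}$, and hence
\begin{align}
\alpha \lambda_j + (1-\alpha)\lambda_{j+1} = \frac{(\lambda_{j+1}-\lambda)\lambda_j + (\lambda - \lambda_j)\lambda_{j+1}}{\lambda_{j+1}-\lambda_j} = \lambda.
\end{align}
One should also note that since $\lambda \in [\lambda_j, \lambda_{j+1}]$ and $\lambda_j < \lambda_{j+1}$, we have $\alpha \in [0,1]$, so this is a genuine convex combination.

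Next I would plug into the right-hand side of the claimed equation and expand using the affine form:
\begin{align}
\alpha \hat{Y}_x^{\lambda_j}(t) + (1-\alpha)\hat{Y}_x^{\lambda_{j+1}}(t)
&= \alpha\big(a_x(t) - \lambda_j c_x(t)\big) + (1-\alpha)\big(a_x(t) - \lambda_{j+1} c_x(t)\big) \\
&= a_x(t) - \big(\alpha \lambda_j + (1-\alpha)\lambda_{j+1}\big) c_x(t) \\
&= a_x(t) - \lambda\, c_x(t) = \hat{Y}_x^\lambda(t),
\end{align}
where the third equality uses the identity established above. This closes the proof. There is no real obstacle here — the only thing to be careful about is bookkeeping with the definition of $\alpha$ and confirming $1 - \alpha$ has the expected form, and noting that the argument is entirely pointwise in $t$ (and in $x$), so no additional structure on $\mathcal{T}$ or on the functions $a_x, c_x$ is needed beyond the stated affine dependence on $\lambda$.
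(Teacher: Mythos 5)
Your proof is correct and follows essentially the same route as the paper's: expand $\hat{Y}_x^{\lambda_j}(t)$ and $\hat{Y}_x^{\lambda_{j+1}}(t)$ via the affine form, combine with weights $\alpha$ and $1-\alpha$, and use the identity $\alpha\lambda_j + (1-\alpha)\lambda_{j+1} = \lambda$. Your additional check that $\alpha \in [0,1]$ (so the combination is genuinely convex) is a small bonus the paper leaves implicit.
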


\begin{proof}
We expand each term:
\begin{align*}
\hat{Y}_x^{\lambda_j}(t) &= a_x(t) - \lambda_j \cdot c_x(t), \\
\hat{Y}_x^{\lambda_{j+1}}(t) &= a_x(t) - \lambda_{j+1} \cdot c_x(t).
\end{align*}
Then:
\begin{align*}
\alpha \cdot \hat{Y}_x^{\lambda_j}(t) + (1 - \alpha) \cdot \hat{Y}_x^{\lambda_{j+1}}(t)
&= \alpha \cdot (a_x(t) - \lambda_j c_x(t)) + (1 - \alpha) \cdot (a_x(t) - \lambda_{j+1} c_x(t)) \\
&= a_x(t) - [\alpha \lambda_j + (1 - \alpha) \lambda_{j+1}] \cdot c_x(t) \\
&= a_x(t) - \lambda \cdot c_x(t) = \hat{Y}_x^\lambda(t),
\end{align*}
since:
\[
\alpha \lambda_j + (1 - \alpha) \lambda_{j+1} = \lambda.
\]
\end{proof}

\begin{corollary}[Sufficiency of Two Models per Interval]
Under the affine assumption, the utility $\hat{Y}_x^\lambda(t)$ for any $\lambda \in [\lambda_j, \lambda_{j+1}]$ can be exactly reconstructed using only the endpoints $\hat{Y}_x^{\lambda_j}(t)$ and $\hat{Y}_x^{\lambda_{j+1}}(t)$. Thus, it is sufficient to use only the two corresponding models $f_{\lambda_j}$ and $f_{\lambda_{j+1}}$ for interpolation within the interval.
\end{corollary}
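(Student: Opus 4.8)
The plan is to derive the corollary as an immediate consequence of Proposition~\ref{prop:affine-closure}. The key observation is that Proposition~\ref{prop:affine-closure} already gives an explicit closed-form expression for $\hat{Y}_x^\lambda(t)$ in terms of $\hat{Y}_x^{\lambda_j}(t)$ and $\hat{Y}_x^{\lambda_{j+1}}(t)$, namely the convex combination with weight $\alpha = (\lambda_{j+1}-\lambda)/(\lambda_{j+1}-\lambda_j)$. So the ``exact reconstruction'' claim is simply a restatement: for any $\lambda \in [\lambda_j,\lambda_{j+1}]$, once the two endpoint utility vectors are known, $\alpha$ is a known scalar and the interpolated utility is determined with no residual error. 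I would state this in one or two lines, citing Proposition~\ref{prop:affine-closure} directly.

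The second half of the corollary — that it therefore suffices to use only the two models $f_{\lambda_j}$ and $f_{\lambda_{j+1}}$ for interpolation within the interval — requires connecting the utility-level statement to the policy/model level. Here I would argue as follows: the routing decision for cost preference $\lambda$ depends on $\lambda$ only through the ordering of the utilities $\{\hat{Y}_x^\lambda(t)\}_{t\in\mathcal{T}}$ (the optimal treatment is the argmax). Since each such utility is exactly an $\alpha$-weighted combination of the endpoint utilities, which are in turn what $f_{\lambda_j}$ and $f_{\lambda_{j+1}}$ are trained to predict, all information needed to determine $t^*(\lambda)$ is contained in the outputs of these two models together with the scalar $\alpha$ (equivalently, $\lambda$). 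No model trained at an interior point $\lambda \in (\lambda_j,\lambda_{j+1})$ carries additional information. I would phrase this as a short remark rather than a formal derivation, since the expressivity claim that the joint architecture can realize this reconstruction is deferred to Proposition~\ref{prop:architecture-expressivity}.

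The main (and only) obstacle is mostly expository: being careful to separate the clean mathematical content (exact linear interpolation of utilities, which is trivial given the previous proposition) from the looser engineering-level assertion about ``using only two models,'' which implicitly relies on the endpoint models being accurate predictors of $\hat{Y}_x^{\lambda_j}(\cdot)$ and $\hat{Y}_x^{\lambda_{j+1}}(\cdot)$ and on the downstream architecture being able to implement the affine map. I would make this dependency explicit by noting that the sufficiency claim is conditional on $f_{\lambda_j}, f_{\lambda_{j+1}}$ approximating the endpoint utilities well, and point forward to Proposition~\ref{prop:architecture-expressivity} for the architectural realizability. Beyond that caveat, there is no real computation to do — the corollary is a direct corollary in the literal sense.

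\begin{proof}
By Proposition~\ref{prop:affine-closure}, for any $\lambda \in [\lambda_j, \lambda_{j+1}]$ and every $t \in \mathcal{T}$ we have
\[
\hat{Y}_x^\lambda(t) = \alpha \cdot \hat{Y}_x^{\lambda_j}(t) + (1-\alpha)\cdot \hat{Y}_x^{\lambda_{j+1}}(t), \qquad \alpha := \frac{\lambda_{j+1}-\lambda}{\lambda_{j+1}-\lambda_j} \in [0,1].
\]
Since $\alpha$ is a known scalar determined by $\lambda$, the right-hand side depends only on the two endpoint utility vectors $\hat{Y}_x^{\lambda_j}(\cdot)$ and $\hat{Y}_x^{\lambda_{j+1}}(\cdot)$. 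Hence $\hat{Y}_x^\lambda(\cdot)$ is exactly reconstructed from these endpoints, with no approximation error.

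Consequently, the optimal treatment $t^*(\lambda) = \arg\max_{t\in\mathcal{T}} \hat{Y}_x^\lambda(t)$ is a function of the endpoint utilities and $\lambda$ alone: no utility estimate at an interior cost preference $\lambda \in (\lambda_j, \lambda_{j+1})$ provides information not already contained in the pair $\big(\hat{Y}_x^{\lambda_j}(\cdot),\, \hat{Y}_x^{\lambda_{j+1}}(\cdot)\big)$. Therefore, provided the models $f_{\lambda_j}$ and $f_{\lambda_{j+1}}$ accurately predict the utilities at their respective endpoints, they are jointly sufficient for interpolation within the interval $[\lambda_j,\lambda_{j+1}]$; that such an interpolation is realizable by the proposed joint architecture is established in Proposition~\ref{prop:architecture-expressivity}.
\end{proof}
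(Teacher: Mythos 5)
Your proposal is correct and takes the same route as the paper, which simply notes that the corollary "follows immediately from the statement of Proposition 4" (the affine closure result). Your version is just a more explicit elaboration of that one-line argument, with a useful additional caveat about the sufficiency claim depending on the accuracy of the endpoint models.
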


\begin{proof}
    This follows immediately from the statement of Proposition 4.
\end{proof}

\begin{proposition}[Expressivity of Additive Two-Model joint Architecture]
Let $\lambda \in [\lambda_j, \lambda_{j+1}]$, and suppose that for each $t \in \mathcal{T}$ the utility function satisfies $\hat{Y}_x^\lambda(t) = a_x(t) - \lambda \cdot c_x(t)$. Then the optimal treatment $t^*(\lambda) := \arg\max_{t} \hat{Y}_x^\lambda(t)$ can be exactly represented by a softmax policy over a function of the form:
\[
f(x, \lambda) = \texttt{Linear} \left( [f_{\lambda_j}(x), f_{\lambda_{j+1}}(x)] + g(\lambda) \right),
\]
where $g(\lambda)$ is any differentiable embedding of $\lambda$, and $f_{\lambda_j}, f_{\lambda_{j+1}}$ are accurate predictors trained at endpoints $\lambda_j$ and $\lambda_{j+1}$.
\end{proposition}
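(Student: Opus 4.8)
\textbf{Proof proposal for Proposition~\ref{prop:architecture-expressivity}.}

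The plan is to exhibit an explicit choice of weights for the final \texttt{Linear} layer (together with a choice of $g$) under which $f(x,\lambda)$ reproduces, up to an additive constant that softmax ignores, the affine utility vector $\lambda\mapsto\big(a_x(t)-\lambda c_x(t)\big)_{t\in\mathcal{T}}$; since softmax applied to a vector proportional to the utilities (with any positive temperature scaling) concentrates on the argmax, this yields $t^*(\lambda)$ exactly. First I would invoke Corollary~\ref{cor:two-models} (equivalently Proposition~\ref{prop:affine-closure}): for $\lambda\in[\lambda_j,\lambda_{j+1}]$ we have $\hat{Y}_x^\lambda(t)=\alpha\,\hat{Y}_x^{\lambda_j}(t)+(1-\alpha)\,\hat{Y}_x^{\lambda_{j+1}}(t)$ with $\alpha=\frac{\lambda_{j+1}-\lambda}{\lambda_{j+1}-\lambda_j}$, so the target vector is an explicit affine function of $\lambda$ applied coordinatewise to the two endpoint utility vectors.

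Next I would use the hypothesis that $f_{\lambda_j}$ and $f_{\lambda_{j+1}}$ are \emph{accurate predictors at the endpoints}: their pre-softmax outputs can be taken to equal (an affine transform of) $\hat{Y}_x^{\lambda_j}(\cdot)$ and $\hat{Y}_x^{\lambda_{j+1}}(\cdot)$ respectively — this is exactly what ``accurate'' means for a router whose softmax recovers the optimal treatment. Concatenating gives a $2|\mathcal{T}|$-dimensional vector containing both endpoint utility vectors. The final \texttt{Linear} map $W\in\mathbb{R}^{|\mathcal{T}|\times 2|\mathcal{T}|}$ then only needs to form, for each coordinate $t$, the combination $\alpha\,\hat{Y}_x^{\lambda_j}(t)+(1-\alpha)\,\hat{Y}_x^{\lambda_{j+1}}(t)$. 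The subtlety is that the mixing coefficient $\alpha$ depends on $\lambda$, whereas $W$ is a fixed linear map; this is precisely where the additive term $g(\lambda)$ enters. I would choose $g(\lambda)$ to encode $\alpha$ (e.g. $g$ maps $\lambda$ to a vector whose relevant entries carry $\alpha$ and $1-\alpha$), and then observe that after the additive injection $[f_{\lambda_j}(x),f_{\lambda_{j+1}}(x)]+g(\lambda)$ the desired coordinatewise convex combination is realizable by $W$ — the cleanest route is to let $g$ supply the per-block scalars and have $W$ perform the (now $\lambda$-independent) selection and summation, using that a linear layer can implement block-wise weighted sums whose weights have been pre-loaded additively into the hidden representation. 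Finally I would note $a_x(t)$ is common to both endpoint utilities, so it appears as a shift in every coordinate and is annihilated by softmax, leaving the argmax over $-\lambda c_x(t)$ shifted appropriately — hence the softmax policy puts all mass on $t^*(\lambda)$.

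The main obstacle, and the step I would treat most carefully, is the $\alpha$-vs-linear-layer tension just described: a single fixed $\texttt{Linear}$ cannot by itself produce a $\lambda$-dependent convex combination, so the argument must honestly route the $\lambda$-dependence through $g(\lambda)$ and then argue that the \emph{composition} $\texttt{Linear}(\,[f_{\lambda_j},f_{\lambda_{j+1}}]+g(\lambda)\,)$ is expressive enough — this may require either allowing $g$ to be vector-valued with enough coordinates to carry the mixing weights, or appealing to the fact that softmax is scale-equivariant so it suffices to match the utility vector up to a positive scalar and an additive constant (relaxing ``exactly represent the utilities'' to ``exactly represent the argmax''), which is all the proposition claims. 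I would make explicit which of these two readings of the architecture is intended and then the verification reduces to the routine algebra of Proposition~\ref{prop:affine-closure} plus the standard fact that $\mathrm{softmax}$ at temperature $\tau\to 0$ (or simply the $\arg\max$ of the logits) selects the maximal coordinate.
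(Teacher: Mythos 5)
Your proposal follows the same route as the paper's proof: invoke Proposition~\ref{prop:affine-closure} to write $\hat{Y}_x^\lambda(t)$ as the convex combination $\alpha\,\hat{Y}_x^{\lambda_j}(t)+(1-\alpha)\,\hat{Y}_x^{\lambda_{j+1}}(t)$, identify the endpoint models' pre-softmax outputs with the endpoint utilities via the ``accurate predictors'' hypothesis, let the final \texttt{Linear} layer form the combination, and use the fact that softmax preserves the argmax. You also correctly isolate the crux that the paper's proof passes over in a single sentence (``including $g(\lambda)$ allows the architecture to learn any additional monotonic reweighting''): the mixing coefficient $\alpha$ depends on $\lambda$ while the final linear map is fixed, so all $\lambda$-dependence must enter through $g$.

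However, the mechanism you propose for resolving this tension --- pre-loading $\alpha$ and $1-\alpha$ additively into the hidden representation and letting the linear layer ``perform the weighted sum'' --- does not work. Writing the final map as $W=[A\mid B]$ and $u=f_{\lambda_j}(x)$, $v=f_{\lambda_{j+1}}(x)$, the architecture computes $W([u,v]+g(\lambda)) = Au+Bv+Wg(\lambda)$: the output is additively separable in $(u,v)$ and $\lambda$, with no cross term, so a linear layer cannot convert an additively injected scalar $\alpha$ into the product $\alpha\,u_t$. Consequently, for every $\lambda$ the logits are the \emph{same} fixed linear combination $Au+Bv$ of the endpoint scores plus an $x$-independent offset $Wg(\lambda)$, and such an offset cannot in general reproduce $\arg\max_t[\alpha u_t+(1-\alpha)v_t]$ simultaneously for all queries: with $A,B$ proportional to the identity, $Au+Bv$ is the utility at one fixed effective trade-off $\mu$, and two queries can share a ranking at $\mu$ yet disagree at $\lambda\neq\mu$. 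Your fallback of only matching the argmax up to positive scale and additive shift does not escape this, since the obstruction is the missing multiplicative interaction, not the normalization. To be fair, the paper's own proof asserts rather than establishes exactly this step, so your diagnosis of where the argument is fragile is sharper than the published justification; but as written, neither your pre-loading construction nor the paper's appeal to reweighting closes the gap --- one would need either a genuinely multiplicative (gating) interaction between $g(\lambda)$ and the endpoint features, or a restriction to a single fixed query $x$, for which an $x$-independent $\lambda$-dependent offset trivially suffices.
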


\begin{proof}
From Proposition~\ref{prop:affine-closure}, the utility $\hat{Y}_x^\lambda(t)$ is a convex combination of $\hat{Y}_x^{\lambda_j}(t)$ and $\hat{Y}_x^{\lambda_{j+1}}(t)$. If the network $f(x, \lambda)$ linearly combines the outputs of $f_{\lambda_j}(x)$ and $f_{\lambda_{j+1}}(x)$, then its scores can match $\hat{Y}_x^\lambda(t)$ up to a scalar transformation. Applying softmax preserves the argmax.

Including $g(\lambda)$ allows the architecture to learn any additional monotonic reweighting of the interpolation, ensuring the output scores can be shaped to approximate the true utility surface exactly. Thus, the architecture can represent the optimal policy within each interval.
\end{proof}

\section{Additional Results}\label{app:single_lambda_results}

In this section, we present the additional plots for the rest of the datasets as well as the exact values of utility for value $\lambda$. We begin by presenting the rest of the figures.

\subsection{Additional Figures}
\begin{figure}[H]
\centering
\includegraphics[width=1\linewidth]{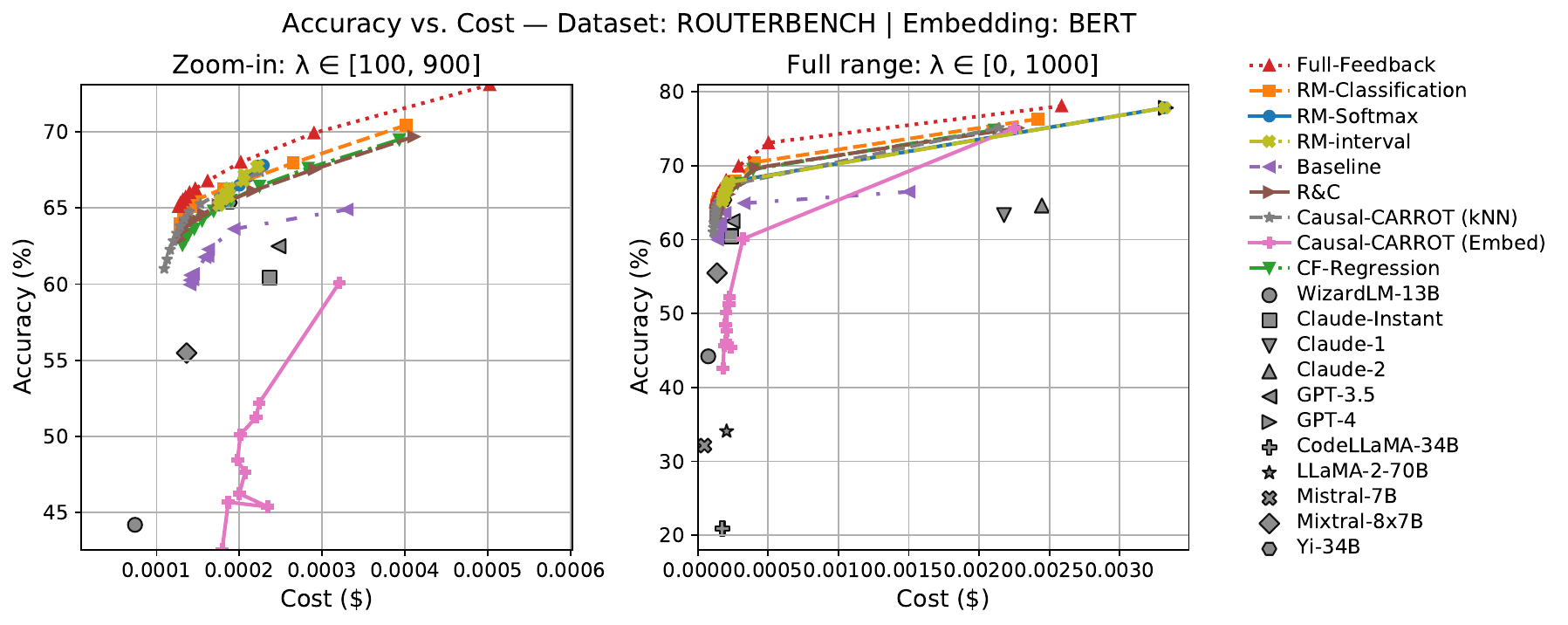}
\caption{Accuracy–cost trade-off curve for RouterBench with BERT embeddings.}
\end{figure}

\begin{figure}[H]
\centering
\includegraphics[width=1\linewidth]{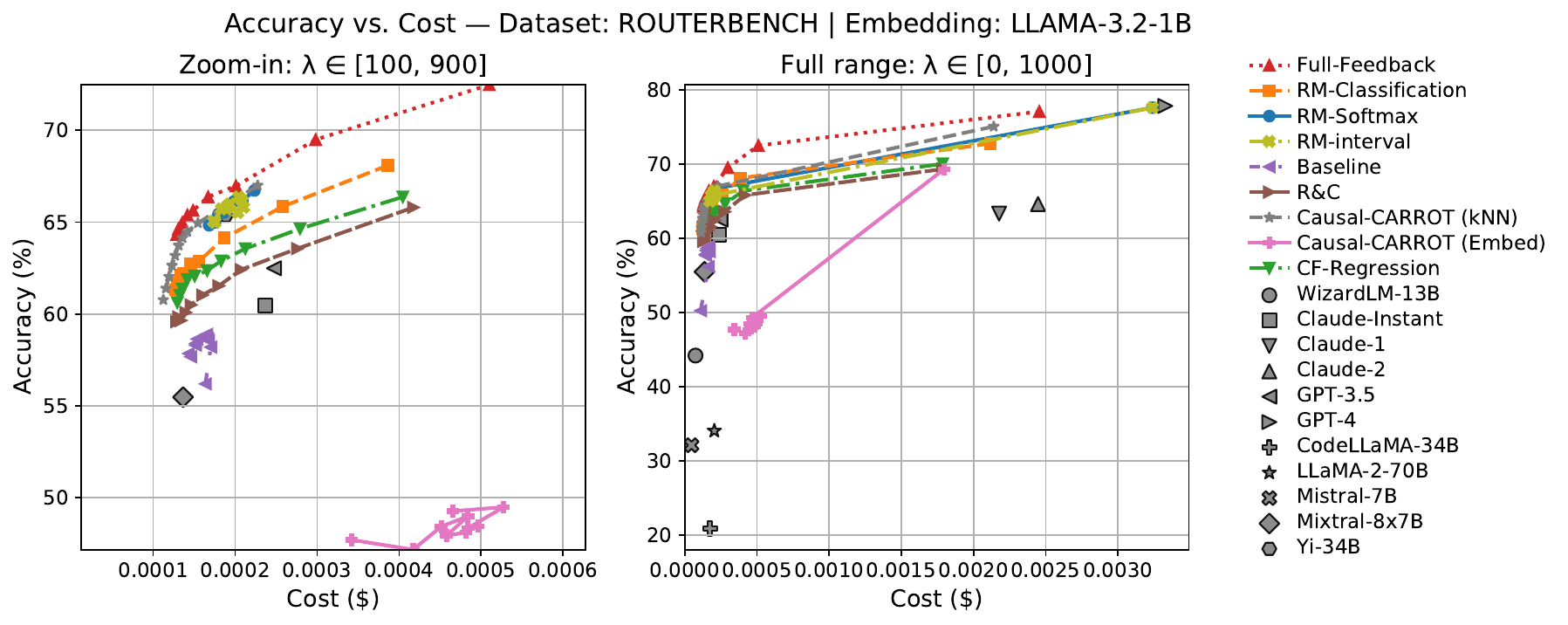}
\caption{Accuracy–cost trade-off curve for RouterBench with LLaMa-3.2-1B embeddings.}
\label{fig:single_lambda_plot}
\end{figure}

\begin{figure}[H]
\centering
\includegraphics[width=1\linewidth]{images/cost_vs_accuracy_sprout_BERT.pdf}
\caption{Accuracy–cost trade-off curve for SPROUT with BERT embeddings.}
\end{figure}

\begin{figure}[H]
\centering
\includegraphics[width=1\linewidth]{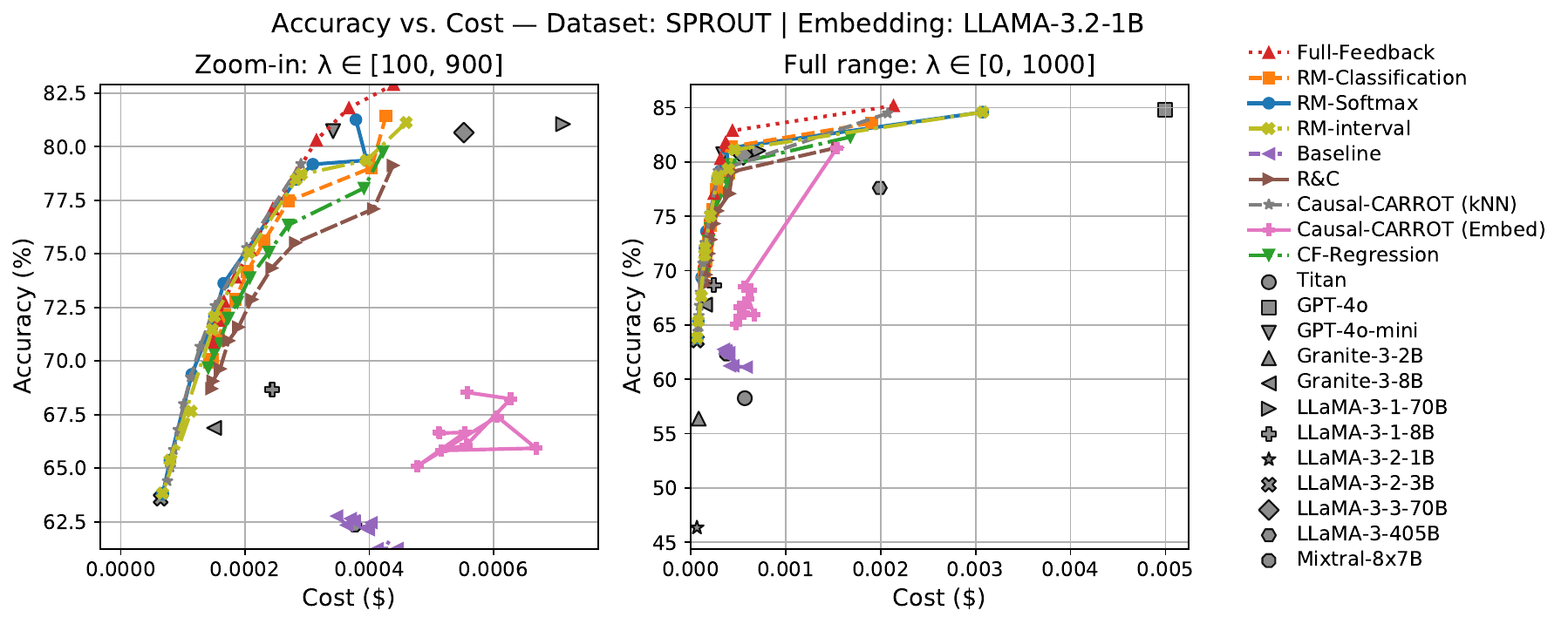}
\caption{Accuracy–cost trade-off curve for SPROUT with LLaMa-3.2-1B embeddings.}
\end{figure}

\subsection{Additional Tables}

\begin{table}[H]
\centering
\caption{Utility for \textbf{RouterBench} with \texttt{BERT} embeddings. \textit{Full Feedback} serves as an oracle upper bound. The best-performing method for each column is highlighted in bold.}
\label{tab:routerbench_bert}
\resizebox{\linewidth}{!}{%
\begin{tabular}{lccccccccccc}
\toprule
\textbf{Method} & $\lambda=0$ & $\lambda=100$ & $\lambda=200$ & $\lambda=300$ & $\lambda=400$ & $\lambda=500$ & $\lambda=600$ & $\lambda=700$ & $\lambda=800$ & $\lambda=900$ & $\lambda=1000$ \\
\midrule
Full-Feedback & \textit{78.07}$_{\pm0.14}$ & \textit{68.07}$_{\pm0.24}$ & \textit{64.12}$_{\pm0.18}$ & \textit{61.96}$_{\pm0.13}$ & \textit{60.31}$_{\pm0.13}$ & \textit{58.92}$_{\pm0.18}$ & \textit{57.62}$_{\pm0.21}$ & \textit{56.26}$_{\pm0.30}$ & \textit{55.00}$_{\pm0.24}$ & \textit{53.73}$_{\pm0.31}$ & \textit{52.42}$_{\pm0.18}$ \\ \hdashline
Baseline & 66.46$_{\pm0.66}$ & 61.60$_{\pm0.64}$ & 59.75$_{\pm0.46}$ & 57.41$_{\pm0.49}$ & 55.37$_{\pm0.51}$ & 53.85$_{\pm0.96}$ & 51.67$_{\pm1.04}$ & 50.55$_{\pm1.07}$ & 48.72$_{\pm1.28}$ & 47.87$_{\pm0.76}$ & 46.19$_{\pm0.46}$ \\
R\&C & 75.08$_{\pm0.49}$ & 65.58$_{\pm0.28}$ & 61.73$_{\pm0.53}$ & 59.63$_{\pm0.53}$ & 58.23$_{\pm0.58}$ & 56.79$_{\pm0.63}$ & 55.47$_{\pm0.64}$ & 54.34$_{\pm0.46}$ & 53.08$_{\pm0.58}$ & 51.65$_{\pm0.59}$ & 50.18$_{\pm0.53}$ \\
Causal-CARROT (kNN) & 75.12$_{\pm0.64}$ & 65.12$_{\pm0.51}$ & 62.21$_{\pm0.52}$ & 60.56$_{\pm0.49}$ & 58.94$_{\pm0.41}$ & 57.39$_{\pm0.42}$ & 55.86$_{\pm0.35}$ & 54.41$_{\pm0.42}$ & 52.95$_{\pm0.46}$ & 51.47$_{\pm0.49}$ & 50.08$_{\pm0.60}$ \\
Causal-CARROT (EmbedNet) & 75.07$_{\pm0.50}$ & 56.87$_{\pm1.39}$ & 47.71$_{\pm4.76}$ & 44.66$_{\pm1.95}$ & 42.06$_{\pm2.31}$ & 38.55$_{\pm3.30}$ & 35.22$_{\pm2.27}$ & 32.25$_{\pm2.40}$ & 26.66$_{\pm6.05}$ & 28.89$_{\pm3.53}$ & 24.64$_{\pm5.52}$ \\
CF-Regression & 74.81$_{\pm0.55}$ & 65.56$_{\pm0.17}$ & 61.95$_{\pm0.44}$ & 59.71$_{\pm0.69}$ & 57.91$_{\pm0.65}$ & 56.36$_{\pm0.73}$ & 54.87$_{\pm0.55}$ & 53.44$_{\pm0.63}$ & 52.19$_{\pm0.48}$ & 50.78$_{\pm0.51}$ & 49.43$_{\pm0.39}$ \\
RM-Classification & 76.30$_{\pm0.61}$ & \textbf{66.43}$_{\pm0.28}$ & 62.65$_{\pm0.64}$ & 60.85$_{\pm0.54}$ & \textbf{59.65}$_{\pm0.51}$ & \textbf{58.07}$_{\pm0.62}$ & \textbf{56.80}$_{\pm0.47}$ & \textbf{55.29}$_{\pm0.58}$ & \textbf{54.07}$_{\pm0.62}$ & \textbf{52.45}$_{\pm0.72}$ & \textbf{51.14}$_{\pm0.61}$ \\
RM-Softmax & \textbf{77.82}$_{\pm0.03}$ & 65.51$_{\pm0.51}$ & \textbf{63.30}$_{\pm0.26}$ & 60.75$_{\pm0.61}$ & 58.58$_{\pm0.59}$ & 56.48$_{\pm0.52}$ & 54.53$_{\pm0.57}$ & 52.79$_{\pm0.74}$ & 51.10$_{\pm0.98}$ & 49.22$_{\pm1.14}$ & 47.64$_{\pm1.40}$ \\ \hdashline
RM-Interval & \textbf{77.82}$_{\pm0.03}$ & 65.51$_{\pm0.30}$ & \textbf{63.30}$_{\pm0.26}$ & \textbf{61.01}$_{\pm0.51}$ & 58.58$_{\pm0.59}$ & 56.92$_{\pm0.31}$ & 54.53$_{\pm0.57}$ & 53.11$_{\pm0.77}$ & 51.10$_{\pm0.98}$ & 49.60$_{\pm1.21}$ & 47.64$_{\pm1.40}$ \\
\bottomrule
\end{tabular}
}
\end{table}

\begin{table}[H]
\centering
\caption{Utility for \textbf{RouterBench} with \texttt{Llama-3.2-1B} embeddings. 
The best-performing method for each column is highlighted in bold.}
\label{tab:routerbench_llama-3.2-1B}
\resizebox{\linewidth}{!}{%
\begin{tabular}{lccccccccccc}
\toprule
\textbf{Method} & $\lambda=0$ & $\lambda=100$ & $\lambda=200$ & $\lambda=300$ & $\lambda=400$ & $\lambda=500$ & $\lambda=600$ & $\lambda=700$ & $\lambda=800$ & $\lambda=900$ & $\lambda=1000$ \\
\midrule
Full-Feedback & \textit{77.06}$_{\pm0.30}$ & \textit{67.37}$_{\pm0.35}$ & \textit{63.52}$_{\pm0.35}$ & \textit{60.94}$_{\pm0.27}$ & \textit{59.70}$_{\pm0.24}$ & \textit{58.20}$_{\pm0.19}$ & \textit{56.88}$_{\pm0.33}$ & \textit{55.49}$_{\pm0.38}$ & \textit{54.03}$_{\pm0.14}$ & \textit{52.86}$_{\pm0.25}$ & \textit{51.47}$_{\pm0.28}$ \\ \hdashline
Baseline & 50.24$_{\pm1.28}$ & 54.56$_{\pm0.96}$ & 54.79$_{\pm1.11}$ & 53.54$_{\pm0.63}$ & 52.00$_{\pm0.57}$ & 50.63$_{\pm0.68}$ & 49.43$_{\pm0.66}$ & 47.78$_{\pm1.22}$ & 46.19$_{\pm0.61}$ & 44.96$_{\pm0.66}$ & 43.10$_{\pm0.84}$ \\
R\&C & 69.30$_{\pm0.63}$ & 61.62$_{\pm0.70}$ & 58.04$_{\pm0.44}$ & 56.20$_{\pm0.52}$ & 54.33$_{\pm0.58}$ & 53.02$_{\pm0.62}$ & 51.72$_{\pm0.57}$ & 50.30$_{\pm0.50}$ & 48.94$_{\pm0.48}$ & 48.03$_{\pm0.56}$ & 46.73$_{\pm0.56}$ \\
Causal-CARROT (kNN) & 75.04$_{\pm0.47}$ & 64.72$_{\pm0.47}$ & 61.86$_{\pm0.53}$ & \textbf{60.23}$_{\pm0.54}$ & \textbf{58.71}$_{\pm0.49}$ & \textbf{57.18}$_{\pm0.48}$ & \textbf{55.58}$_{\pm0.50}$ & \textbf{54.04}$_{\pm0.49}$ & \textbf{52.50}$_{\pm0.51}$ & \textbf{51.00}$_{\pm0.59}$ & \textbf{49.54}$_{\pm0.54}$ \\
Causal-CARROT (EmbedNet) & 69.30$_{\pm0.38}$ & 44.62$_{\pm1.38}$ & 38.94$_{\pm2.63}$ & 33.82$_{\pm2.07}$ & 28.60$_{\pm4.29}$ & 24.04$_{\pm3.76}$ & 20.47$_{\pm6.44}$ & 15.11$_{\pm3.78}$ & 12.32$_{\pm5.07}$ & 9.64$_{\pm6.26}$ & 13.48$_{\pm1.45}$ \\
CF-Regression & 70.02$_{\pm0.33}$ & 62.33$_{\pm0.78}$ & 59.05$_{\pm0.24}$ & 57.19$_{\pm0.42}$ & 55.58$_{\pm0.56}$ & 54.08$_{\pm0.43}$ & 53.03$_{\pm0.64}$ & 51.97$_{\pm0.64}$ & 50.50$_{\pm0.56}$ & 49.10$_{\pm0.45}$ & 47.68$_{\pm0.44}$ \\
RM-Classification & 72.76$_{\pm0.50}$ & 64.22$_{\pm0.65}$ & 60.68$_{\pm0.56}$ & 58.54$_{\pm0.46}$ & 56.64$_{\pm0.72}$ & 55.44$_{\pm0.33}$ & 53.98$_{\pm0.46}$ & 52.71$_{\pm0.44}$ & 51.14$_{\pm0.59}$ & 50.06$_{\pm0.55}$ & 48.63$_{\pm0.36}$ \\
RM-Softmax & \textbf{77.58}$_{\pm0.49}$ & \textbf{64.49}$_{\pm0.50}$ & \textbf{61.97}$_{\pm0.52}$ & 60.17$_{\pm0.61}$ & 58.20$_{\pm0.46}$ & 56.21$_{\pm0.40}$ & 54.34$_{\pm0.51}$ & 52.86$_{\pm0.79}$ & 50.96$_{\pm0.92}$ & 49.70$_{\pm1.19}$ & 47.69$_{\pm1.65}$ \\ \hdashline
RM-Interval & \textbf{77.58}$_{\pm0.49}$ & 63.70$_{\pm1.47}$ & \textbf{61.97}$_{\pm0.52}$ & 59.43$_{\pm1.56}$ & 58.20$_{\pm0.46}$ & 56.31$_{\pm0.47}$ & 54.34$_{\pm0.51}$ & 52.55$_{\pm0.91}$ & 50.96$_{\pm0.92}$ & 49.23$_{\pm1.81}$ & 47.69$_{\pm1.65}$ \\
\bottomrule
\end{tabular}
}
\end{table}

\begin{table}[H]
\centering
\caption{Utility for \textbf{SPOUT} with \texttt{BERT} embeddings. 
The best-performing method for each column is highlighted in bold.}
\label{tab:sprout_bert}
\resizebox{\linewidth}{!}{%
\begin{tabular}{lccccccccccc}
\toprule
\textbf{Method} & $\lambda=0$ & $\lambda=100$ & $\lambda=200$ & $\lambda=300$ & $\lambda=400$ & $\lambda=500$ & $\lambda=600$ & $\lambda=700$ & $\lambda=800$ & $\lambda=900$ & $\lambda=1000$ \\
\midrule
Full-Feedback & \textit{85.99}$_{\pm0.17}$ & \textit{79.34}$_{\pm0.13}$ & \textit{75.55}$_{\pm0.29}$ & \textit{72.16}$_{\pm0.24}$ & \textit{69.47}$_{\pm0.23}$ & \textit{66.97}$_{\pm0.37}$ & \textit{64.79}$_{\pm0.31}$ & \textit{63.18}$_{\pm0.21}$ & \textit{61.43}$_{\pm0.27}$ & \textit{59.98}$_{\pm0.30}$ & \textit{58.83}$_{\pm0.24}$ \\ \hdashline
Baseline & 66.88$_{\pm1.55}$ & 64.62$_{\pm0.66}$ & 61.85$_{\pm0.78}$ & 59.15$_{\pm0.84}$ & 56.73$_{\pm0.67}$ & 54.08$_{\pm0.57}$ & 51.17$_{\pm0.54}$ & 48.79$_{\pm0.67}$ & 45.74$_{\pm1.21}$ & 42.58$_{\pm1.36}$ & 38.97$_{\pm2.65}$ \\
R\&C & 83.34$_{\pm0.32}$ & 76.45$_{\pm0.53}$ & 72.40$_{\pm0.56}$ & 68.59$_{\pm0.69}$ & 65.34$_{\pm0.66}$ & 63.19$_{\pm0.83}$ & 60.98$_{\pm0.81}$ & 59.01$_{\pm0.86}$ & 57.21$_{\pm0.99}$ & 55.97$_{\pm1.11}$ & 54.33$_{\pm1.37}$ \\
Causal-CARROT (kNN) & 84.52$_{\pm0.35}$ & 76.55$_{\pm0.37}$ & 71.34$_{\pm0.23}$ & 67.82$_{\pm0.43}$ & 65.38$_{\pm0.40}$ & 63.38$_{\pm0.43}$ & 61.77$_{\pm0.43}$ & 60.39$_{\pm0.38}$ & 59.07$_{\pm0.37}$ & 57.99$_{\pm0.35}$ & 56.99$_{\pm0.34}$ \\
Causal-CARROT (EmbedNet) & 83.46$_{\pm0.39}$ & 68.73$_{\pm0.74}$ & 62.44$_{\pm2.40}$ & 56.72$_{\pm1.68}$ & 54.37$_{\pm2.12}$ & 48.91$_{\pm2.41}$ & 46.35$_{\pm1.53}$ & 43.66$_{\pm3.04}$ & 41.81$_{\pm2.34}$ & 37.42$_{\pm5.86}$ & 34.69$_{\pm4.44}$ \\
CF-Regression & 83.54$_{\pm0.25}$ & 76.65$_{\pm0.46}$ & 72.42$_{\pm0.53}$ & 68.95$_{\pm0.60}$ & 65.81$_{\pm0.67}$ & 63.58$_{\pm0.67}$ & 61.37$_{\pm0.64}$ & 59.52$_{\pm0.54}$ & 57.72$_{\pm0.72}$ & 56.31$_{\pm0.66}$ & 54.56$_{\pm0.71}$ \\
RM-Classification & 84.20$_{\pm0.24}$ & \textbf{77.58}$_{\pm0.34}$ & 73.36$_{\pm0.49}$ & 69.84$_{\pm0.48}$ & 66.49$_{\pm0.63}$ & 64.03$_{\pm1.02}$ & 61.84$_{\pm1.13}$ & 59.68$_{\pm1.48}$ & 58.09$_{\pm1.22}$ & 56.52$_{\pm1.63}$ & 54.85$_{\pm1.76}$ \\
RM-Softmax & \textbf{84.97}$_{\pm0.39}$ & 77.53$_{\pm0.81}$ & \textbf{73.89}$_{\pm0.00}$ & \textbf{70.47}$_{\pm0.00}$ & \textbf{67.38}$_{\pm0.47}$ & \textbf{65.51}$_{\pm0.60}$ & \textbf{64.03}$_{\pm0.39}$ & \textbf{62.04}$_{\pm1.17}$ & \textbf{60.32}$_{\pm1.31}$ & \textbf{58.85}$_{\pm1.13}$ & \textbf{56.95}$_{\pm0.55}$ \\ \hdashline
RM-Interval & \textbf{84.97}$_{\pm0.39}$ & 77.60$_{\pm0.62}$ & \textbf{73.89}$_{\pm0.00}$ & 69.92$_{\pm0.57}$ & \textbf{67.38}$_{\pm0.47}$ & 65.20$_{\pm0.67}$ & \textbf{64.03}$_{\pm0.39}$ & 61.54$_{\pm1.31}$ & \textbf{60.32}$_{\pm1.31}$ & 58.58$_{\pm1.15}$ & \textbf{56.95}$_{\pm0.55}$ \\
\bottomrule
\end{tabular}
}
\end{table}

\begin{table}[H]
\centering
\caption{Utility for \textbf{SPOUT} with \texttt{LLaMa-3.2-1B} embeddings. 
The best-performing method for each column is highlighted in bold.}
\label{tab:sprout_llama-3.2-1B}
\resizebox{\linewidth}{!}{%
\begin{tabular}{lccccccccccc}
\toprule
\textbf{Method} & $\lambda=0$ & $\lambda=100$ & $\lambda=200$ & $\lambda=300$ & $\lambda=400$ & $\lambda=500$ & $\lambda=600$ & $\lambda=700$ & $\lambda=800$ & $\lambda=900$ & $\lambda=1000$ \\
\midrule
Full-Feedback & \textit{85.19}$_{\pm0.17}$ & \textit{78.50}$_{\pm0.35}$ & \textit{74.47}$_{\pm0.28}$ & \textit{70.87}$_{\pm0.23}$ & \textit{67.44}$_{\pm0.22}$ & \textit{64.87}$_{\pm0.54}$ & 62.69$_{\pm0.36}$ & \textit{61.01}$_{\pm0.57}$ & \textit{59.51}$_{\pm0.55}$ & \textit{57.75}$_{\pm0.60}$ & \textit{56.22}$_{\pm0.58}$ \\ \hdashline
Baseline & 61.12$_{\pm1.95}$ & 57.12$_{\pm1.66}$ & 52.35$_{\pm2.63}$ & 51.23$_{\pm1.04}$ & 47.86$_{\pm2.19}$ & 42.50$_{\pm2.64}$ & 40.05$_{\pm4.79}$ & 34.20$_{\pm5.47}$ & 30.20$_{\pm5.82}$ & 29.64$_{\pm5.25}$ & 27.99$_{\pm5.66}$ \\
R\&C & 81.34$_{\pm0.41}$ & 74.75$_{\pm0.54}$ & 68.97$_{\pm5.81}$ & 67.09$_{\pm0.67}$ & 64.62$_{\pm0.57}$ & 62.33$_{\pm0.63}$ & 60.22$_{\pm0.46}$ & 58.81$_{\pm0.48}$ & 56.86$_{\pm0.81}$ & 55.76$_{\pm1.17}$ & 54.10$_{\pm1.48}$ \\
Causal-CARROT (kNN) & 84.50$_{\pm0.38}$ & 76.30$_{\pm0.42}$ & 71.24$_{\pm0.67}$ & 68.02$_{\pm0.54}$ & 65.57$_{\pm0.53}$ & 63.62$_{\pm0.47}$ & 61.93$_{\pm0.43}$ & 60.33$_{\pm0.41}$ & \textbf{59.05}$_{\pm0.37}$ & \textbf{57.86}$_{\pm0.35}$ & 56.89$_{\pm0.29}$ \\
Causal-CARROT (EmbedNet) & 81.29$_{\pm0.48}$ & 62.96$_{\pm1.32}$ & 55.69$_{\pm4.27}$ & 49.40$_{\pm5.41}$ & 45.21$_{\pm4.47}$ & 32.52$_{\pm14.00}$ & 31.05$_{\pm11.57}$ & 31.70$_{\pm6.75}$ & 22.34$_{\pm9.09}$ & 20.47$_{\pm9.64}$ & 15.52$_{\pm6.70}$ \\
CF-Regression & 82.32$_{\pm0.45}$ & 75.55$_{\pm0.43}$ & 70.24$_{\pm5.37}$ & 68.26$_{\pm0.64}$ & 65.55$_{\pm0.58}$ & 63.51$_{\pm0.52}$ & 61.49$_{\pm0.53}$ & 59.90$_{\pm0.60}$ & 58.19$_{\pm0.60}$ & 56.86$_{\pm0.61}$ & 55.61$_{\pm0.76}$ \\
RM-Classification & 83.60$_{\pm0.43}$ & 77.17$_{\pm0.47}$ & 70.95$_{\pm6.42}$ & 69.36$_{\pm0.62}$ & 66.38$_{\pm0.76}$ & 63.99$_{\pm0.89}$ & 61.82$_{\pm0.71}$ & 60.45$_{\pm0.68}$ & 58.38$_{\pm0.71}$ & 56.83$_{\pm0.83}$ & 55.64$_{\pm1.35}$ \\
RM-Softmax & \textbf{84.60}$_{\pm0.89}$ & \textbf{77.48}$_{\pm1.21}$ & \textbf{71.45}$_{\pm6.01}$ & 69.91$_{\pm0.85}$ & \textbf{67.18}$_{\pm0.51}$ & \textbf{65.35}$_{\pm0.60}$ & \textbf{63.04}$_{\pm1.35}$ & \textbf{61.38}$_{\pm1.61}$ & 59.04$_{\pm1.05}$ & 57.72$_{\pm0.33}$ & \textbf{57.12}$_{\pm0.21}$ \\ \hdashline
RM-Interval & \textbf{84.60}$_{\pm0.89}$ & 76.54$_{\pm4.06}$ & \textbf{71.45}$_{\pm6.01}$ & \textbf{69.99}$_{\pm0.62}$ & \textbf{67.18}$_{\pm0.51}$ & 64.73$_{\pm0.97}$ & \textbf{63.04}$_{\pm1.35}$ & 61.16$_{\pm1.84}$ & 59.04$_{\pm1.05}$ & 57.48$_{\pm2.90}$ & \textbf{57.12}$_{\pm0.21}$ \\
\bottomrule
\end{tabular}
}
\end{table}

We also report the AUC (Area Under the Curve) of the accuracy–cost trade-off curve for each method. This is computed using the \texttt{sklearn.metrics.auc}  function \cite{sklearn_api}. AUC provides a single scalar summary of performance that captures how well a model balances accuracy and computational cost. A higher AUC indicates a more favorable overall trade-off across budgets, making it a robust evaluation metric for comparing routing strategies.

\begin{table}[H]
\centering
\caption{Average AUC over 10 trials across datasets and embedding models. Higher is better. Abbreviations: RB = RouterBench, SP = SPROUT.}
\label{tab:single_lambda_auc}
\begin{tabular}{lcccc}
\toprule
\textbf{Method} & \textbf{RB-BERT} & \textbf{RB-LLaMa} & \textbf{SP-BERT} & \textbf{SP-LLaMa} \\
\midrule

Full-Feedback      & $0.1839$ & $0.1719$& $0.1727$  & $0.1655$  \\ \hdashline
Baseline    & $0.0890$ &  $0.0134$& $0.0255$ & $0.0148$ \\
R\&C      & $0.1539$ & $0.1108$ & $0.1316$ & $0.1105$ \\
Causal-CARROT (kNN)      & $0.1441$ & $0.1433$&$0.1642$  &  $0.1618$ \\
Causal-CARROT (EmbedNet)      & $0.1376$ & $0.0842$ & $0.1148$  & $0.0768$ \\

CF-Regression         & $0.1412$ & $0.1119$&  $0.1352$& $0.1233$ \\
RM-Classification     & $0.1665$ & $0.1389$ & $0.1477$ & $0.1436$ \\
RM-Softmax  & $\textbf{0.2286}$ & $\textbf{0.2213}$ &  $\textbf{0.3320}$ & $\textbf{0.2444}$  \\
\hdashline
RM-Interval  & $0.2285$ & $0.2196$ &  $0.3320$ & $0.2464$  \\
\bottomrule
\end{tabular}
\end{table}

\section{Contribution of Each Component}

The results highlight the impact of causal bias correction, end-to-end training, and regret-focused objectives as follows:

    \paragraph{Causal Inference (Bias Correction):} Our Baseline corresponds to CARROT without any causal correction - that is, a decoupled predictor trained directly on observational data without accounting for treatment bias. It consistently performs the worst across all settings. In contrast, incorporating causal techniques for bias correction yields substantial gains: both R\&C and Causal-CARROT, which integrate causal adjustments into routing, achieve +10–15\% routing accuracy at comparable cost, demonstrating that accounting for selection bias is critical for effective routing. These results validate that bias-aware routing significantly improves utility over naive predictors trained on biased data.
    \paragraph{End-to-End Learning vs. Two-Stage:} Among bias-corrected approaches, our end-to-end regret-minimizing methods (RM) consistently outperform the two-stage methods (CF-Regression, Causal-CARROT, R\&C). The performance gap (+1–3\% routing accuracy at comparable cost) demonstrates the benefit of an integrated end-to-end approach: by directly optimizing the decision-quality objective (regret) rather than optimizing intermediate predictions, our method achieves superior and more stable results.
    \paragraph{Regret Minimization Objective:} Even compared to other causal learners, our specific training objective provides an edge. For instance, RM-Softmax (differentiable surrogate) slightly outperforms RM-Classification (upper-bound surrogate) in most cases, with lower variance (+0.5–1\% routing accuracy at comparable cost). This highlights the advantage of our softmax-weighted regret objective and its alignment with the true decision loss. Moreover, both RM methods outperform Causal-CARROT and CF-Regression, underscoring that minimizing expected regret is more effective than surrogate approaches that focus only on accuracy/cost prediction.

\section{Experimental Details}\label{app:hyperparams}

All experiments were implemented in Python 3.8.12~\cite{python}, using PyTorch 2.4.1+cu121~\cite{NEURIPS2019_9015} and Scikit-learn~\cite{sklearn_api}. Experiments were conducted on an internal compute cluster equipped with an Intel(R) Xeon(R) Platinum 8260 CPU @ 2.40GHz, 512~GB of RAM, and two NVIDIA V100 GPUs with 16~GB memory each.

\paragraph{Prompt Encoding \& Augmentation} To encode input queries into vector representations \(x\), we employ a two-stage embedding process. First, we enrich each prompt with contextual metadata by prepending a natural language prefix that identifies the source dataset. Specifically, for a prompt \texttt{p} originating from dataset \texttt{D} (e.g., \texttt{openhermes/teknium}), we construct the following context-augmented input: \textit{``The following prompt comes from the dataset D. The prompt is: \texttt{p}''}. This step provides the embedding model with useful dataset-level context, which is particularly beneficial in multi-domain routing scenarios. The template is flexible and can be extended to include additional metadata if desired.

\paragraph{Datasets.} 

\textbf{RouterBench}~\citep{hu2024routerbench} is a standardized benchmark for LLM routing, comprising 35,712 prompt-response pairs collected from 11 LLMs. The prompts span eight different evaluation benchmarks covering reasoning, factual knowledge, dialogue, mathematics, and code generation. Each prompt is annotated with model accuracy and execution cost, enabling response-based decision-making. To maintain consistency in evaluation, we adopt the same split strategy for RouterBench, applied deterministically at the prompt level to ensure reproducibility. \textbf{SPROUT}~\citep{somerstep2025carrot} is a more recent and larger benchmark for cost-aware routing, consisting of 44,241 prompts and responses from 13 state-of-the-art language models. The prompts are drawn from six diverse benchmarks, including GPQA \citep{rein2024gpqa}, MuSR \citep{sprague2023musr}, MMLU-Pro \citep{wang2024mmlu}, MATH \citep{hendrycks2021measuring}, OpenHermes \citep{teknium2023openhermes}, and RAGBench \citep{friel2407ragbench}. SPROUT includes a predefined train/validation/test split, using 80\% of the data for training and splitting the remaining 20\% equally between validation and test sets.

\paragraph{Neural Router Models.}  
All neural models used in our experiments share the same architecture for fairness and comparability. We use a 2-layer feedforward neural network with GELU activation and 200 hidden units per layer. Models are trained using the Adam optimizer with a learning rate of \(10^{-4}\), batch size of 128, and a maximum of 10,000 epochs. Early stopping is applied with a patience of 100 epochs based on validation regret. The temperature parameter for the softmax-based regret objective is set to 100, and to 1000 for the interval model to allow smoother gradients across budget intervals.

\paragraph{Doubly Robust Estimation.}  
For the outcome model \(\hat{r}_t(x)\), we use the same neural architecture described above, trained separately for each treatment \(t\). For the propensity model \(\hat{p}(t|x)\), we use an XGBoost classifier with the following hyperparameters: maximum depth = [1,2,3,5], number of trees = [10,20,50,100]. The estimated DR scores are clipped to the [5\textsuperscript{th}, 95\textsuperscript{th}] percentile to reduce the impact of extreme propensity weights and improve training stability.

\paragraph{Embedding Generation.}  
We generate sentence-level embeddings using the \texttt{bert-base-uncased} (768-dim) and \texttt{meta-llama/Llama-3.2-1B} (2048-dim) models. Embeddings are extracted via mean pooling over the final hidden states and are precomputed in batches using GPU acceleration. These embeddings are fixed during training of all downstream routing models.

\paragraph{RM-Interval Network.}  
The joint model used for budget interpolation is implemented using a small feedforward network that takes as input the concatenation of outputs from \(f_{\lambda_j}\), \(f_{\lambda_{j+1}}\), and a linear embedding of \(\lambda\). The architecture mirrors the router described above and is fine-tuned using the regret objective over interval-specific training data. The proposed architecture is presented in Figure \ref{fig:budget-aware-architecture}.

\paragraph{Inference latency and computational efficiency} Latency is critical in real-time applications. Our method is designed to minimize this by using lightweight routing networks (2-layer MLP with 200 neurons per hidden layer) and precomputed light-weight embeddings (Llama-3.2-1B and BERT). For instance, with Llama-3.2-1B embeddings and an MLP-based router on the RoutherBench dataset, the end-to-end routing latency is under 2.5 ms on a single A100 GPU for a batch size of almost 25,000. To contextualize this, recent benchmarks show that on 8× A100s, LLaMA‑2 or LLaMA‑3 70B can generate a ~100-token paragraph in 1.5 to 2.5 seconds under realistic conditions using optimized inference stacks. Even when using fewer GPUs or smaller models, generation latency typically remains one to two orders of magnitude higher than our routing time. In many cases (e.g., longer outputs or cold starts), the difference can be significantly larger. Thus, the routing overhead is negligible relative to the cost of LLM generation and does not meaningfully impact user experience. Additionally, the interval-conditioned architecture reduces deployment complexity by avoiding the need to train or store separate models for different user preferences.

\end{document}